\documentclass{article} 
\usepackage{iclr2027_conference,times}

\usepackage{amsmath,amsfonts,bm}

\def\eqref#1{equation~\ref{#1}}

\def\1{\bm{1}}

\def\vtheta{{\bm{\theta}}}

\def\vc{{\bm{c}}}

\def\vx{{\bm{x}}}

\DeclareMathAlphabet{\mathsfit}{\encodingdefault}{\sfdefault}{m}{sl}
\SetMathAlphabet{\mathsfit}{bold}{\encodingdefault}{\sfdefault}{bx}{n}

\usepackage{amsmath,amssymb}
\providecommand{\vc}{\mathbf{c}}
\providecommand{\vx}{\mathbf{x}}

\providecommand{\vtheta}{\boldsymbol{\theta}}
\providecommand{\vphi}{\boldsymbol{\phi}}

\usepackage[T1]{fontenc}
\usepackage{amsmath,amssymb}
\usepackage{booktabs,tabularx,array}
\usepackage{xcolor}
\usepackage{listings}
\usepackage{tcolorbox}
\tcbuselibrary{skins,breakable,listings}

\definecolor{PromptSystem}{HTML}{552583}
\definecolor{PromptReason}{HTML}{A9A6B1}
\definecolor{PromptFast}{HTML}{FDB927}
\definecolor{PromptReference}{HTML}{552583}
\definecolor{PromptInk}{HTML}{241635}

\newtcblisting{agentprompt}[2]{%
  enhanced,
  breakable,
  listing only,
  listing engine=listings,
  colback=#1!3!white,
  colframe=#1,
  colbacktitle=#1!12!white,
  coltitle=PromptInk,
  coltext=PromptInk,
  title={#2},
  title after break={#2 (continued)},
  fonttitle=\sffamily\bfseries\small,
  boxrule=0.45pt,
  leftrule=1.6pt,
  arc=1mm,
  left=2.2mm,
  right=2.2mm,
  top=1.5mm,
  bottom=1.5mm,
  before skip=6pt,
  after skip=6pt,
  listing options={%
    language={},
    basicstyle=\ttfamily\footnotesize\color{PromptInk},
    columns=fullflexible,
    keepspaces=true,
    showstringspaces=false,
    showspaces=false,
    showtabs=false,
    breaklines=true,
    breakatwhitespace=false,
    breakindent=0pt,
    tabsize=4,
    aboveskip=0pt,
    belowskip=0pt
  }
}

\usepackage{amsmath}   
\usepackage{amsthm}
\newtheorem{proposition}{Proposition}
\usepackage{booktabs}  
\usepackage{multirow}  
\usepackage{graphicx}  
\usepackage{colortbl}
\usepackage{hyperref}
\usepackage{url}
\usepackage{subcaption}

\title{Act First, Reason Later: Accelerating On-Policy Distillation for Multi-Turn Agents via Reference-Conditioned Inverse Dynamics}

\author{%
Zubin Zheng\textsuperscript{1}\thanks{Equal contribution. This work was partially conducted during Zubin’s internship at DeepCybo.}, 
Jiahao Wu\textsuperscript{3}\footnotemark[1], 
Shaofeng Zhang\textsuperscript{1,2}, 
Zhirui Zhang\textsuperscript{4}, 
Yew-Soon Ong\textsuperscript{5}, 
Shengcai Liu\textsuperscript{1}\thanks{Corresponding author. Email: \texttt{liusc3@sustech.edu.cn}.}
\\
\textsuperscript{1}Guangdong Provincial Key Laboratory of Brain-Inspired Intelligent Computation,\\ 
Department of Computer Science and Engineering, Southern University of Science and Technology\\
\textsuperscript{2}Zhongguancun Academy\\
\textsuperscript{3}Hong Kong Polytechnic University\\
\textsuperscript{4}DeepCybo\\
\textsuperscript{5}College of Computing \& Data Science, Nanyang Technological University
}

\iclrfinalcopy 
\begin{document}

\maketitle
\lhead{Preprint}

\begin{abstract}
On-policy distillation (OPD) trains multi-turn language agents with dense teacher supervision on student-generated responses. 
However, standard think-then-act rollouts require lengthy reasoning before each short action, delaying environment transitions and experience collection. 
Generating actions directly reduces this delay but can degrade rollout quality. 
To address this, we propose \mbox{\textbf{ActFirst-OPD}}, an~act-first, reason-later training framework that decouples environment interaction from full-response generation. 
The student infers and executes actions through reference-conditioned inverse dynamics using its current interaction context and a reference next observation, and switches to autonomous next-action prediction when the resulting transition deviates from the reference trajectory. 
From the collected interaction contexts, the student asynchronously generates full think-then-act responses for token-level teacher
supervision. 
Experiments across 0.6B-, 1.7B-, and 4B-parameter Qwen3 students show that ActFirst-OPD achieves average wall-clock training speedups of $2.3\times$ on ALFWorld, $1.8\times$ on WebShop, and $4.9\times$ on ScienceWorld over Vanilla OPD.
It matches or exceeds all compared OPD baselines in mean task success rate across eight of nine benchmark--model settings.
These results demonstrate that reasoning need not block acting during multi-turn agent distillation.
\end{abstract}

\section{Introduction}
\label{sec:introduction} 
Large language models (LLMs) are evolving from static text generators into agents that reason and act in interactive environments~\citep{react}. 
In multi-turn tasks, agents execute actions based on current observations, driving environment transitions that produce the next observation for subsequent reasoning and action~\citep{alfworld,webshop,scienceworld}. 
On-policy distillation (OPD)~\citep{minillm,gkd} trains such agents with dense token-level teacher supervision on student-generated responses~\citep{tcod,turnopd}.
Unlike supervised fine-tuning (SFT) on offline trajectories, OPD continually collects turn-level responses under the evolving student policy, covering response prefixes the student may encounter at inference and mitigating exposure bias~\citep{minillm,gkd}.

\begin{figure}[t]
    \centering
    \includegraphics[width=\linewidth]{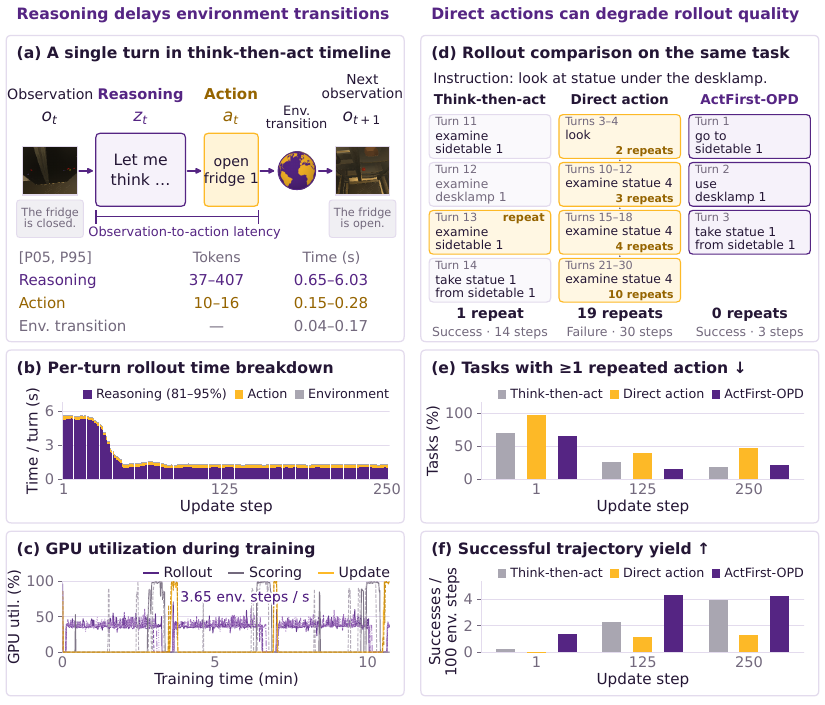}
    \caption{\textbf{Rollout latency and quality on ALFWorld with Qwen3-1.7B.}
    Ranges in (a) denote the 5th--95th percentiles.
    (c) shows GPU utilization during the first 10 minutes of training.
    In (d--f), all rollout strategies use the same frozen Vanilla OPD
    checkpoint at each update step, with the same 3,553 training tasks, and a 30-turn limit.
    (d) shows selected turns from the same task at update 1.}
    \label{fig:motivation}
\end{figure}

However, online experience collection remains costly in multi-turn OPD~\citep{turnopd,reopd}.
In standard think-then-act rollouts, the student completes lengthy reasoning before submitting a short action at each turn, delaying the environment transition (Figure~\ref{fig:motivation}(a)).
Our profiling of Qwen3-1.7B on ALFWorld shows that reasoning accounts for
81--95\% of per-turn rollout time (Figure~\ref{fig:motivation}(b)).
These delays accumulate across turns, ultimately increasing training time (Figure~\ref{fig:motivation}(c)).
We characterize this as a reasoning-blocked transition bottleneck in experience collection.
Generating actions directly while deferring full-response generation can reduce this delay but may degrade rollout quality.
Controlled comparisons show that direct-action rollouts exhibit more unproductive repetition (Figure~\ref{fig:motivation}(d, e))
and yield fewer successful trajectories per 100 environment transitions
(Figure~\ref{fig:motivation}(f)).
These observations raise the question: 
\emph{Can the student act to advance the environment without waiting for full reasoning while maintaining rollout quality?}

To address this challenge, we propose \textbf{ActFirst-OPD}, an act-first, reason-later training framework that decouples environment interaction from full-response generation.
Our key idea is to use high-quality reference trajectories to provide local transition targets for fast action generation, maintaining rollout quality.
Drawing on inverse dynamics~\citep{ridm}, we have the student infer and execute
an action from its current interaction context and a reference next observation.
When interaction deviates from the reference trajectory, the student
switches to autonomous next-action prediction for the rest of the rollout.
While fast actions advance the environment, the same student asynchronously generates full think-then-act responses from the collected interaction contexts without using reference next observations for token-level teacher supervision.
This preserves full-response distillation while removing reasoning from the critical path of environment transitions.

The main contributions of this work are summarized below.
\begin{itemize}
    \item We identify and characterize reasoning-blocked environment transitions in multi-turn OPD and the rollout-quality risks of direct action through profiling and controlled comparisons.

    \item We introduce ActFirst-OPD, combining reference-conditioned inverse dynamics and next-action prediction with asynchronous full-response generation for distillation. We also derive the idealized rollout speedup and its upper bound, and measure empirical gains.

     \item Across 0.6B-, 1.7B-, and 4B-parameter Qwen3 students, ActFirst-OPD achieves average wall-clock training speedups of $2.3\times$ on ALFWorld, $1.8\times$ on WebShop, and $4.9\times$ on ScienceWorld over Vanilla OPD under matched hardware and update steps.
     It matches or exceeds all compared OPD baselines in mean task success rate across eight of nine benchmark--model settings.
\end{itemize}

\section{Related Work}
\label{sec:related_work} 
\paragraph{On-Policy Distillation.}
Early OPD methods train students with dense teacher supervision on student-generated sequences~\citep{minillm,gkd}.
Applications span reasoning, strong-to-weak distillation, and integration of domain-specific capabilities~\citep{thinkingmachineslab-opd,qwen3,deepseekv4}.
Recent studies examine OPD training dynamics and mechanisms~\citep{opd-survey,rethinking-opd}.
For multi-turn agents, TCOD~\citep{tcod} uses temporal curricula to mitigate trajectory-level KL instability, while TurnOPD~\citep{turnopd} combines adaptive rollout depth with turn-level loss weighting.
Our work addresses delays from pre-action reasoning and the rollout-quality risks of direct action generation.
Additional comparisons with recent OPD methods are provided in Appendix~\ref{app:extended-related-work}.

\paragraph{Multi-Turn LLM Agents.}
LLM agents interleave reasoning and actions, as exemplified by ReAct~\citep{react}, for tasks such as embodied planning~\citep{alfworld} and web navigation~\citep{webshop}.
Recent coding agents, including Claude Code~\citep{claudecode} and Codex~\citep{codex}, and agent harnesses~\citep{agentharness,codeasharness} support long-horizon, tool-using workflows.
Despite these advances, training such agents remains challenging due to credit assignment under sparse or delayed rewards~\citep{gigpo} and compounding distribution shift across turns~\citep{simpletir}.

\paragraph{Inverse Dynamics.}
Inverse dynamics models infer actions from state transitions and have been used for self-supervised reinforcement learning~\citep{pathak2017curiosity} and imitation without expert action annotations~\citep{bco}.
Recent embodied-agent methods use predicted visual futures or their representations to guide action generation~\citep{vpp,vera}.
Most closely related to our formulation, RIDM~\citep{ridm} infers actions from the learner's current observation and the expert's next observation to drive environment interaction without expert actions.
We adapt this conditioning scheme to multi-turn OPD,
using reference next observations to guide fast student actions
and maintain rollout quality when collecting interaction contexts.

\section{Preliminaries}
\label{sec:preliminaries}

\paragraph{Multi-Turn Agent Interaction.}
We consider a language agent interacting with an environment to complete
a task specified by an instruction $q$.
At turn $t$, the agent receives an observation $o_t$ and maintains
the history of preceding interactions,
\begin{equation}
    h_t=(o_1,a_1,\ldots,o_{t-1},a_{t-1}),
    \qquad h_1=\varnothing,
    \label{eq:agent-history}
\end{equation}
where $a_i$ is the action submitted to the environment at turn $i$;
past reasoning is excluded from $h_t$.
We write the agent input schematically as interaction context
$\vc_t=q\oplus h_t\oplus o_t$, where $\oplus$ denotes concatenation
and benchmark-specific system instructions and formatting are implicit.
In standard think-then-act interaction~\citep{react}, the student policy
$\pi_{\vtheta}$ generates a full response
\begin{equation}
    y_t=(z_t,a_t)\sim\pi_{\vtheta}(\cdot\mid\vc_t),
    \label{eq:agent-response}
\end{equation}
where $z_t$ denotes the reasoning segment and $a_t$ denotes the executable
action segment; only $a_t$ is submitted to the environment.
The environment executes $a_t$ and returns the next observation $o_{t+1}$.
Interaction ends when the environment terminates or the prescribed turn limit is reached,
yielding the trajectory
$\tau=(o_1,a_1,o_2,\ldots,o_T,a_T,o_{T+1})$
over $T$ interaction turns.
Prompt construction, history management, and thinking budgets are
detailed in Appendices~\ref{app:prompts},
\ref{app:memory-management}, and~\ref{app:thinking}, respectively.

\paragraph{On-Policy Distillation for Multi-Turn Agents.}
Multi-turn OPD trains a student policy $\pi_{\vtheta}$ with a frozen
teacher $\pi_{\vphi}$ on responses collected from online student
rollouts.
Let $\vx_t=(x_{t,1},\ldots,x_{t,L_t})$ denote the tokenization of the
full response $y_t=(z_t,a_t)$, where $L_t$ is the number of response
tokens at turn $t$.
For each response position $j\in\{1,\ldots,L_t\}$, define the token context
\begin{equation}
    \vc_{t,j}
    =
    \vc_t\oplus\vx_{t,<j},
    \label{eq:token-context}
\end{equation}
where
$\vx_{t,<j}=(x_{t,1},\ldots,x_{t,j-1})$
is the response prefix preceding token $x_{t,j}$.
Following prior studies~\citep{tcod}, multi-turn OPD minimizes the expected student-to-teacher reverse
Kullback--Leibler (KL) divergence over supervised token contexts,
\begin{equation}
    \mathcal{L}_{\mathrm{OPD}}(\vtheta)
    =
    \mathbb{E}
    \left[
        D_{\mathrm{KL}}
        \left(
            \pi_{\vtheta}(\cdot\mid\vc_{t,j})
            \,\|\,
            \pi_{\vphi}(\cdot\mid\vc_{t,j})
        \right)
    \right].
    \label{eq:opd-objective}
\end{equation}
The expectation is over collected turn-level full responses $y_t$ at turns $t$ of interaction trajectories $\tau$, and supervised token positions $j$ within each response.
For each token context, the reverse KL is
\begin{equation}
    D_{\mathrm{KL}}
    \left(
        \pi_{\vtheta}(\cdot\mid\vc_{t,j})
        \,\|\,
        \pi_{\vphi}(\cdot\mid\vc_{t,j})
    \right)
    =
    \sum_{v\in\mathcal{V}}
    \pi_{\vtheta}(v\mid\vc_{t,j})
    \log
    \frac{
        \pi_{\vtheta}(v\mid\vc_{t,j})
    }{
        \pi_{\vphi}(v\mid\vc_{t,j})
    },
    \label{eq:reverse-kl}
\end{equation}
where $\mathcal{V}$ denotes the token vocabulary.
In practice, we optimize a sampled policy surrogate of
Eq.~\ref{eq:opd-objective}; the update rule, clipping, masking,
and loss normalization are detailed in
Appendix~\ref{app:opd-optimization}.

\section{ActFirst-OPD}
\label{sec:method}

ActFirst-OPD decouples environment interaction from full-response generation, as illustrated in
Figure~\ref{fig:actfirst-framework}.
The student advances the environment through actions conditioned
on reference next observations, with autonomous next-action prediction
once the rollout diverges from the reference trajectory.
Meanwhile, full responses are generated asynchronously
from collected contexts for distillation, allowing subsequent
environment interactions to proceed without waiting for reasoning.

\begin{figure}[t]
    \centering
    \includegraphics[width=\linewidth]{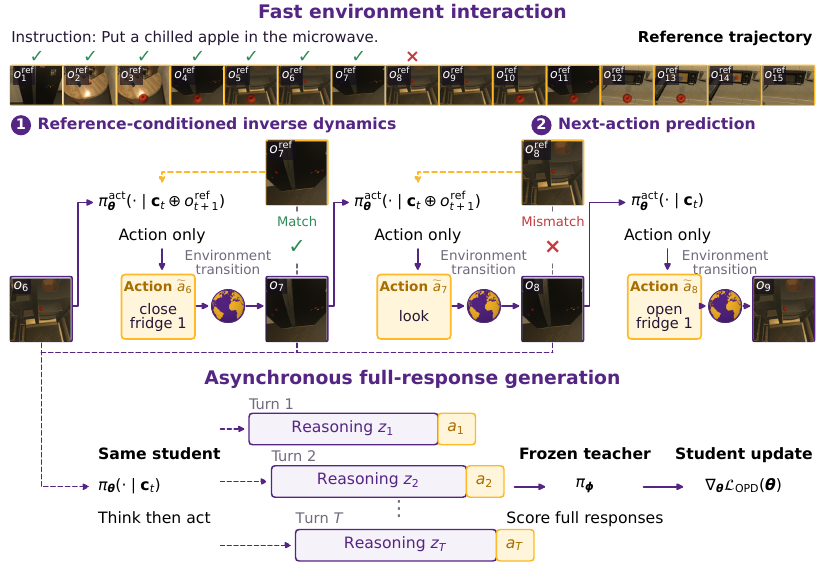}
    \caption{\textbf{Overview of ActFirst-OPD.}
    Top: Reference-conditioned inverse dynamics generates fast actions
    $\tilde a_t$ for environment interaction, with autonomous next-action
    prediction once the rollout diverges from the reference trajectory.
    Bottom: The same student asynchronously generates full responses
    $y_t=(z_t,a_t)$ from collected interaction contexts for teacher
    supervision.}
    \label{fig:actfirst-framework}
\end{figure}

\subsection{Reference-Conditioned Inverse-Dynamics Rollout}
\label{sec:reference-conditioned-rollout}

Given a high-quality offline reference trajectory for training task $q$, let
$o_1^{\mathrm{ref}},o_2^{\mathrm{ref}},\ldots$
denote its observations.
While the rollout remains aligned with the reference trajectory,
the reference next observation $o_{t+1}^{\mathrm{ref}}$ serves
as a local transition target.
Following the inverse-dynamics formulation~\citep{ridm},
the student infers an action from its actual interaction context
$\vc_t$ and this target:
\begin{equation}
    \tilde a_t
    \sim
    \pi_{\vtheta}^{\mathrm{act}}
    \left(\cdot\mid\vc_t\oplus o_{t+1}^{\mathrm{ref}}\right),
    \label{eq:reference-conditioned-action}
\end{equation}
where $\pi_{\vtheta}^{\mathrm{act}}$ denotes the same student
prompted to generate only an action, without preceding reasoning.
During interaction, $h_t$ records actual observations and
executed fast actions $\tilde a_1,\ldots,\tilde a_{t-1}$.
Thus, $\vc_t=q\oplus h_t\oplus o_t$ remains the student's actual
interaction context.
Executing $\tilde a_t$ produces $o_{t+1}$ and advances the rollout.

\subsection{Autonomous Next-Action Prediction Fallback}
\label{sec:verification-fallback}

After executing a reference-conditioned action $\tilde a_t$,
we apply benchmark-specific transition checks (Appendix~\ref{app:transition-checks}) to determine whether
the resulting observation $o_{t+1}$ aligns with the
target $o_{t+1}^{\mathrm{ref}}$.
Verification concerns the transition outcome and does not require
the generated action to match the reference action.
If verification succeeds, reference guidance continues at the next turn.

Once verification fails, reference-conditioned inverse dynamics is disabled for the
remainder of the rollout.
At each subsequent turn, the student performs autonomous next-action prediction (NAP) using its interaction context alone:
\begin{equation}
    \tilde a_t
    \sim
    \pi_{\vtheta}^{\mathrm{act}}(\cdot\mid\vc_t).
    \label{eq:autonomous-action}
\end{equation}
Interaction continues from the actual state reached by the student,
preserving action-only generation while avoiding continued reliance
on the inapplicable reference trajectory.

\subsection{Asynchronous Full-Response Generation}
\label{sec:asynchronous-full-response}

For each interaction context $\vc_t$ collected during fast rollouts,
the student generates a full think-then-act response
$y_t=(z_t,a_t)\sim\pi_{\vtheta}(\cdot\mid\vc_t)$ using the standard
agent input, without reference next observations.
Full-response generation proceeds asynchronously, allowing
requests from different turns to overlap with one another
and with subsequent environment interactions.
The action $a_t$ need not match the executed fast action $\tilde a_t$
and is not submitted to the environment.

The training batch consists of student-sampled full responses
at contexts visited by the fast rollout.
The frozen teacher $\pi_{\vphi}$ scores each full response under
the same token contexts $\vc_{t,j}$ defined in
Section~\ref{sec:preliminaries}.
These turn-level samples are used to optimize the OPD objective
in Eq.~\ref{eq:opd-objective}.
This preserves full-response supervision while removing full-response
generation from the dependency chain of environment transitions.
At evaluation time, the student uses standard think-then-act
interaction without reference next observations.
We discuss the context distribution induced by fast rollouts and its implications
for on-policy distillation in
Appendix~\ref{app:additional-discussion}.

\paragraph{Idealized Rollout Efficiency.}
We analyze rollout completion time, measured until both environment
interaction and generation of all corresponding full responses
have finished.
Consider a fixed $T$-turn rollout with constant generation latencies
$\ell_{\mathrm{fast}}$ and $\ell_{\mathrm{full}}$ for fast actions and
full responses, respectively, including prefill and autoregressive
decoding, where
$0<\ell_{\mathrm{fast}}\le\ell_{\mathrm{full}}$.
In the idealized model, both requests start as soon as $\vc_t$
becomes available.
We neglect environment and other non-generation overheads and assume
sufficient concurrency so that overlapping requests do not increase
their generation latencies.

\begin{proposition}[Idealized rollout speedup]
\label{prop:rollout-speedup}
Under the above assumptions, standard think-then-act rollout takes
$T\ell_{\mathrm{full}}$, whereas ActFirst-OPD completes in
$(T-1)\ell_{\mathrm{fast}}+\ell_{\mathrm{full}}$.
The resulting idealized rollout speedup satisfies
\begin{equation}
    S_{\mathrm{ideal}}(T)
    =
    \frac{T\ell_{\mathrm{full}}}
    {(T-1)\ell_{\mathrm{fast}}+\ell_{\mathrm{full}}}
    \le
    \min\left\{
        T,\,
        \frac{\ell_{\mathrm{full}}}{\ell_{\mathrm{fast}}}
    \right\}.
    \label{eq:ideal-rollout-speedup}
\end{equation}
\end{proposition}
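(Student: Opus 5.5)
The plan is to handle the two completion times separately, each via a critical-path (dependency-chain) argument under the idealized assumptions, and then to establish the bound by two elementary inequalities.

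\emph{Standard think-then-act.} Observation $o_{t+1}$ exists only after $a_t$ has been submitted, and $a_t$ is the tail of the full response $y_t$. So the request at turn $t+1$ cannot start before the turn-$t$ request finishes. Environment overhead is neglected and each full response takes exactly $\ell_{\mathrm{full}}$, so by induction on $t$ context $\vc_t$ becomes available at time $(t-1)\ell_{\mathrm{full}}$. The last response then ends at $T\ell_{\mathrm{full}}$.

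\emph{ActFirst-OPD.} Only the fast action $\tilde a_t$ gates the transition, so by the same induction $\vc_t$ is available at $(t-1)\ell_{\mathrm{fast}}$. This holds whether the fast action comes from reference-conditioned inverse dynamics or from the NAP fallback, since both are action-only calls with latency $\ell_{\mathrm{fast}}$. The full response for $\vc_t$ starts immediately and, by the concurrency assumption, finishes at $(t-1)\ell_{\mathrm{fast}}+\ell_{\mathrm{full}}$. The fast action $\tilde a_T$ finishes at $T\ell_{\mathrm{fast}}\le (T-1)\ell_{\mathrm{fast}}+\ell_{\mathrm{full}}$, because $\ell_{\mathrm{fast}}\le\ell_{\mathrm{full}}$. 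Completion time is the maximum over all these finishing times. That maximum is attained at $t=T$, giving $(T-1)\ell_{\mathrm{fast}}+\ell_{\mathrm{full}}$. The main subtlety is this step: the completion time must be defined as the maximum over \emph{both} the fast and the full requests, and we must confirm that the last full response, not the last fast action, dominates. That is exactly where $\ell_{\mathrm{fast}}\le\ell_{\mathrm{full}}$ is used.

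\emph{Bound.} Taking the ratio gives the expression for $S_{\mathrm{ideal}}(T)$. For $S\le T$, drop the nonnegative term $(T-1)\ell_{\mathrm{fast}}$ from the denominator, which is valid since $T\ge1$. For $S\le \ell_{\mathrm{full}}/\ell_{\mathrm{fast}}$, the inequality is equivalent to $T\ell_{\mathrm{fast}}\le (T-1)\ell_{\mathrm{fast}}+\ell_{\mathrm{full}}$, i.e.\ $\ell_{\mathrm{fast}}\le\ell_{\mathrm{full}}$, which holds by assumption. Combining the two bounds gives the minimum. Finally, I would remark that $S_{\mathrm{ideal}}(T)$ increases in $T$ toward $\ell_{\mathrm{full}}/\ell_{\mathrm{fast}}$, so the bound is asymptotically tight.
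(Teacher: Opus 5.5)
Your proposal is correct and takes essentially the same route as the paper's proof: the context-release recursion $s_{t+1}=s_t+\ell_{\mathrm{fast}}$, the completion time taken as the maximum of $T\ell_{\mathrm{fast}}$ and the last full-response finish time (resolved by $\ell_{\mathrm{fast}}\le\ell_{\mathrm{full}}$), and the two bounds from lower-bounding the denominator by $\ell_{\mathrm{full}}$ and by $T\ell_{\mathrm{fast}}$. The only differences are cosmetic and lie outside the statement: the paper also records $S_{\mathrm{ideal}}(T)\ge 1$, and it calls $S_{\mathrm{ideal}}$ nondecreasing in $T$ rather than increasing, since it is constant when $\ell_{\mathrm{fast}}=\ell_{\mathrm{full}}$.
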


The proof of Proposition~\ref{prop:rollout-speedup} is provided in Appendix~\ref{app:proof-rollout-speedup}.
Longer-horizon rollouts amortize the final full-response generation cost,
with $S_{\mathrm{ideal}}(T)$ approaching
$\ell_{\mathrm{full}}/\ell_{\mathrm{fast}}$ as $T$ increases (Eq.~\ref{eq:ideal-rollout-speedup-limit}).
ActFirst-OPD additionally generates a fast action at each turn,
so realized speedup requires sufficient serving concurrency
for asynchronous overlap to offset this extra workload
(Appendix~\ref{app:finite-capacity},
Eq.~\ref{eq:concurrency-speedup-condition}).

\section{Experiments}
\label{sec:experiments}

\subsection{Experimental Setup}

\paragraph{Tasks, Models, and Reference Trajectories.}
We evaluate on ALFWorld~\citep{alfworld} for embodied planning,
WebShop~\citep{webshop} for web navigation, and
ScienceWorld~\citep{scienceworld} for scientific reasoning,
with respective interaction limits of 30, 15, and 30 turns.
Following prior multi-turn OPD studies~\citep{tcod,turnopd}, we train
Qwen3-0.6B, 1.7B, and 4B students~\citep{qwen3} using a task-specialized
Qwen3-8B teacher trained with GiGPO~\citep{gigpo} for each benchmark.
ActFirst-OPD uses offline reference trajectories selected from
public demonstrations, model-generated rollouts, and oracle solutions.
Data splits are detailed in Appendix~\ref{app:benchmarks};
teacher preparation and reference construction
are described in Appendix~\ref{app:teacher-reference}.

\paragraph{Baselines and Implementation.}
We compare against Vanilla OPD, TCOD-F2B~\citep{tcod}, and
TurnOPD~\citep{turnopd}, with results from zero-shot students
and task-specialized teachers provided for reference.
We also include Ours w/o ID, which replaces reference-conditioned
inverse dynamics (ID) with direct action generation without references while retaining asynchronous full-response generation.
Within each benchmark--model setting, methods share student
initialization, the teacher, and environment settings, while
retaining their respective rollout and optimization rules.
The main comparison in Section~\ref{sec:main-results} uses 250 training updates, rollout batches
of 16 tasks, and optimization batches of 64 turn-level full responses.
All OPD methods are implemented using Trinity-RFT~\citep{trinityrft},
with vLLM~\citep{vllm} for inference and VERL~\citep{verl}
for optimization.
Each training run uses eight NVIDIA A100-SXM4-80GB GPUs:
four for student rollouts, two for teacher scoring, and two for student optimization.

\paragraph{Evaluation Protocols and Metrics.}
We evaluate on 140 seen and 134 unseen ALFWorld tasks, 100 held-out
WebShop tasks, and 150 held-out ScienceWorld tasks.
All models use standard think-then-act interaction without references
during evaluation.
We report success rate (SR), average interaction turns (Round), and training
wall-clock time; WebShop and ScienceWorld additionally report task
scores to measure partial completion.
Training wall-clock time excludes teacher training, reference construction, and evaluation.
Training speedup is Vanilla OPD's training wall-clock time divided
by the method's time for the same benchmark and student size.
Task-performance means and standard deviations are computed across
three evaluation seeds (42, 43, and 44) for the same trained checkpoint.

\begin{table}[t]
\centering
\caption{\textbf{Task performance and training cost on ALFWorld.}
Overall SR is reported as mean $\pm$ standard deviation across
three evaluation seeds.
Training time is in hours, with speedup relative to Vanilla OPD
for the same student size.
Among OPD methods, \textbf{bold} and \underline{underlined} values
mark the best and second-best results per metric, respectively;
shading highlights the shortest training time.}
\label{tab:main-alfworld}
\small
\setlength{\tabcolsep}{3pt}
\renewcommand{\arraystretch}{1.15}
\resizebox{\linewidth}{!}{%
\begin{tabular}{ccl*{13}{c}}
\toprule
\multicolumn{2}{c}{Model} & \multicolumn{1}{c}{\multirow{3}{*}{Method}} & \multicolumn{13}{c}{ALFWorld} \\
\cmidrule(lr){1-2}\cmidrule(lr){4-16}
\multirow{2}{*}{Teacher} & \multirow{2}{*}{Student} & & \multirow{2}{*}{Pick} & \multirow{2}{*}{Look} & \multirow{2}{*}{Clean} & \multirow{2}{*}{Heat} & \multirow{2}{*}{Cool} & \multirow{2}{*}{Pick2} & \multicolumn{2}{c}{Seen} & \multicolumn{2}{c}{Unseen} & \multicolumn{2}{c}{All} & \multirow{2}{*}{\shortstack{Training Wall-clock\\Time (Speedup)}} \\
\cmidrule(lr){10-11}\cmidrule(lr){12-13}\cmidrule(lr){14-15}
& & & & & & & & & SR$\uparrow$ & Round$\downarrow$ & SR$\uparrow$ & Round$\downarrow$ & SR$\uparrow$ & Round$\downarrow$ & \\
\midrule
\rowcolor{gray!12}\multicolumn{16}{c}{\textit{Qwen3 Series}} \\
\midrule
- & 0.6B & \multirow{3}{*}{Zero-Shot} & 0.00 & 6.45 & 0.00 & 0.00 & 0.00 & 0.00 & 0.00 & 30.00 & 1.49 & 29.69 & 0.73 & 29.85 & N/A \\
- & 1.7B &  & 40.68 & 16.13 & 13.79 & 12.82 & 8.70 & 4.88 & 17.86 & 27.68 & 17.16 & 27.28 & 17.52 & 27.48 & N/A \\
- & 4B &  & 89.83 & 77.42 & 60.34 & 58.97 & 28.26 & 63.41 & 55.71 & 21.92 & 71.64 & 20.10 & 63.50 & 21.03 & N/A \\
\midrule
8B & - & GiGPO & 98.31 & 90.32 & 94.83 & 79.49 & 86.96 & 73.17 & 89.29 & 10.04 & 87.31 & 11.18 & 88.32 & 10.59 & N/A \\
\midrule
\multirow{15}{*}{\shortstack{8B-\\GiGPO}} & \multirow{5}{*}{0.6B} & Vanilla OPD & $80.79$ & $\underline{55.91}$ & $66.67$ & $\underline{59.83}$ & $52.90$ & $\underline{34.96}$ & $61.67$ & $17.52$ & $59.20$ & $17.90$ & $60.46\pm 3.15$ & $17.70$ & $4.18\,\mathrm{h}\,(\times 1.00)$ \\
 &  & TCOD-F2B & $\underline{85.88}$ & $53.76$ & $\underline{67.24}$ & $55.56$ & $\underline{60.14}$ & $32.52$ & $\underline{63.81}$ & $\underline{16.22}$ & $\underline{59.45}$ & $\underline{17.41}$ & $\underline{61.68\pm 1.59}$ & $\underline{16.80}$ & $\underline{2.15\,\mathrm{h}\,(\times 1.94)}$ \\
 &  & TurnOPD & $72.88$ & $\boldsymbol{60.22}$ & $48.85$ & $39.32$ & $53.62$ & $21.95$ & $51.43$ & $19.18$ & $50.00$ & $19.72$ & $50.73\pm 2.89$ & $19.44$ & $2.87\,\mathrm{h}\,(\times 1.46)$ \\
 &  & Ours w/o ID & $72.88$ & $\underline{55.91}$ & $51.15$ & $26.50$ & $34.78$ & $13.01$ & $43.81$ & $20.89$ & $45.02$ & $21.13$ & $44.40\pm 6.38$ & $21.01$ & $2.30\,\mathrm{h}\,(\times 1.82)$ \\
 &  & \textbf{ActFirst-OPD (Ours)} & $\boldsymbol{87.01}$ & $\boldsymbol{60.22}$ & $\boldsymbol{75.86}$ & $\boldsymbol{65.81}$ & $\boldsymbol{66.67}$ & $\boldsymbol{43.09}$ & $\boldsymbol{71.67}$ & $\boldsymbol{14.90}$ & $\boldsymbol{65.42}$ & $\boldsymbol{16.51}$ & $\boldsymbol{68.61\pm 0.89}$ & $\boldsymbol{15.69}$ & \cellcolor{yellow!25}$\boldsymbol{1.93\,\mathrm{h}\,(\times 2.17)}$ \\
\cmidrule(lr){2-16}
 & \multirow{5}{*}{1.7B} & Vanilla OPD & $\underline{92.66}$ & $\underline{63.44}$ & $74.14$ & $65.81$ & $56.52$ & $39.84$ & $70.48$ & $14.70$ & $64.68$ & $16.79$ & $67.64\pm 1.52$ & $15.72$ & $4.46\,\mathrm{h}\,(\times 1.00)$ \\
 &  & TCOD-F2B & $\underline{92.66}$ & $50.54$ & $\underline{81.61}$ & $64.96$ & $55.80$ & $\boldsymbol{57.72}$ & $73.33$ & $14.33$ & $66.92$ & $15.74$ & $70.19\pm 1.38$ & $15.02$ & $\underline{2.32\,\mathrm{h}\,(\times 1.92)}$ \\
 &  & TurnOPD & $90.96$ & $\underline{63.44}$ & $79.31$ & $\boldsymbol{67.52}$ & $\underline{64.49}$ & $\underline{54.47}$ & $\underline{74.76}$ & $\underline{14.01}$ & $\underline{69.40}$ & $\underline{15.30}$ & $\underline{72.14\pm 3.11}$ & $\underline{14.64}$ & $3.21\,\mathrm{h}\,(\times 1.39)$ \\
 &  & Ours w/o ID & $74.01$ & $59.14$ & $62.07$ & $52.14$ & $55.07$ & $30.08$ & $57.86$ & $18.10$ & $55.97$ & $18.97$ & $56.93\pm 3.85$ & $18.53$ & $2.48\,\mathrm{h}\,(\times 1.80)$ \\
 &  & \textbf{ActFirst-OPD (Ours)} & $\boldsymbol{93.79}$ & $\boldsymbol{82.80}$ & $\boldsymbol{85.06}$ & $\underline{66.67}$ & $\boldsymbol{73.19}$ & $\underline{54.47}$ & $\boldsymbol{80.95}$ & $\boldsymbol{12.62}$ & $\boldsymbol{73.88}$ & $\boldsymbol{14.27}$ & $\boldsymbol{77.49\pm 1.20}$ & $\boldsymbol{13.43}$ & \cellcolor{yellow!25}$\boldsymbol{1.95\,\mathrm{h}\,(\times 2.28)}$ \\
\cmidrule(lr){2-16}
 & \multirow{5}{*}{4B} & Vanilla OPD & $92.66$ & $80.65$ & $\boldsymbol{83.91}$ & $\underline{72.65}$ & $\underline{74.64}$ & $68.29$ & $84.05$ & $\underline{11.40}$ & $75.62$ & $13.55$ & $79.93\pm 0.73$ & $12.46$ & $4.56\,\mathrm{h}\,(\times 1.00)$ \\
 &  & TCOD-F2B & $92.66$ & $\underline{88.17}$ & $72.41$ & $58.12$ & $61.59$ & $73.17$ & $79.29$ & $12.65$ & $70.15$ & $15.11$ & $74.82\pm 1.10$ & $13.85$ & $2.72\,\mathrm{h}\,(\times 1.68)$ \\
 &  & TurnOPD & $\underline{93.79}$ & $83.87$ & $\underline{79.89}$ & $\boldsymbol{79.49}$ & $72.46$ & $\underline{76.42}$ & $\underline{84.52}$ & $11.58$ & $\underline{78.36}$ & $\underline{13.13}$ & $\underline{81.51\pm 1.47}$ & $\underline{12.34}$ & $3.86\,\mathrm{h}\,(\times 1.18)$ \\
 &  & Ours w/o ID & $\boldsymbol{95.48}$ & $84.95$ & $75.86$ & $65.81$ & $63.77$ & $60.98$ & $80.00$ & $12.28$ & $70.65$ & $14.75$ & $75.43\pm 0.92$ & $13.49$ & $\underline{2.59\,\mathrm{h}\,(\times 1.76)}$ \\
 &  & \textbf{ActFirst-OPD (Ours)} & $92.66$ & $\boldsymbol{90.32}$ & $78.74$ & $71.79$ & $\boldsymbol{81.88}$ & $\boldsymbol{78.86}$ & $\boldsymbol{86.19}$ & $\boldsymbol{11.01}$ & $\boldsymbol{78.86}$ & $\boldsymbol{12.82}$ & $\boldsymbol{82.60\pm 1.12}$ & $\boldsymbol{11.90}$ & \cellcolor{yellow!25}$\boldsymbol{1.93\,\mathrm{h}\,(\times 2.37)}$ \\
\bottomrule
\end{tabular}%
}
\end{table}

\subsection{Main Results}
\label{sec:main-results}
Across the nine benchmark--model settings, ActFirst-OPD completes
250 updates faster than Vanilla OPD, TCOD-F2B, and TurnOPD,
while matching or exceeding Vanilla OPD's mean SR in eight
(Tables~\ref{tab:main-alfworld}
and~\ref{tab:main-webshop-scienceworld}).
Averaging the per-size training speedups over Vanilla OPD gives
$2.3\times$, $1.8\times$, and $4.9\times$ on ALFWorld,
WebShop, and ScienceWorld, respectively.

ActFirst-OPD achieves the highest mean SR among the compared OPD methods
at all three student sizes on ALFWorld and ScienceWorld.
On ALFWorld, its gains over Vanilla OPD are 8.15, 9.85,
and 2.67 percentage points for 0.6B, 1.7B, and 4B, respectively.
These SR gains may partly stem from reference-conditioned inverse dynamics promoting
task progress during rollout collection and providing more useful
contexts for distillation, consistent with the ablation and
rollout-quality results
(Table~\ref{tab:ablation_average_across_scales};
Figure~\ref{fig:rollout-quality}).
On WebShop, ActFirst-OPD achieves a $1.80\times$ training speedup
over Vanilla OPD with only a 1.00-percentage-point decrease in mean SR
for Qwen3-1.7B.

While Tables~\ref{tab:main-alfworld}
and~\ref{tab:main-webshop-scienceworld} compare performance after
250 training updates, we also compare evaluation SR under comparable
training-time budgets using performance--time curves on ALFWorld
(Figure~\ref{fig:alfworld-performance-time}
in Appendix~\ref{app:additional-experimental-results}).
At approximately 1.5 hours of training, ActFirst-OPD achieves
the highest mean evaluation SR among the compared OPD methods
at all three student sizes, with particularly large
gains over Vanilla OPD and TurnOPD.

\begin{table}[t]
\centering
\caption{\textbf{Task performance and training cost on WebShop
and ScienceWorld.}
Score and SR are reported as mean $\pm$ standard deviation across
three evaluation seeds.
Training time is in hours, with speedup relative to Vanilla OPD
for the same benchmark and student size.
Among OPD methods, \textbf{bold} and \underline{underlined} values
mark the best and second-best Score, SR, and training time,
respectively; shading highlights the shortest training time.}
\label{tab:main-webshop-scienceworld}
\small
\setlength{\tabcolsep}{3pt}
\renewcommand{\arraystretch}{1.15}
\resizebox{\linewidth}{!}{%
\begin{tabular}{ccl*{8}{c}}
\toprule
\multicolumn{2}{c}{Model} & \multicolumn{1}{c}{\multirow{3}{*}{Method}} & \multicolumn{4}{c}{WebShop} & \multicolumn{4}{c}{ScienceWorld} \\
\cmidrule(lr){1-2}\cmidrule(lr){4-7}\cmidrule(lr){8-11}
\multirow{2}{*}{Teacher} & \multirow{2}{*}{Student} & & \multirow{2}{*}{Score$\uparrow$} & \multirow{2}{*}{SR$\uparrow$} & \multirow{2}{*}{Round} & \multirow{2}{*}{\shortstack{Training Wall-clock\\Time (Speedup)}} & \multirow{2}{*}{Score$\uparrow$} & \multirow{2}{*}{SR$\uparrow$} & \multirow{2}{*}{Round} & \multirow{2}{*}{\shortstack{Training Wall-clock\\Time (Speedup)}} \\
& & & & & & & & & & \\
\midrule
\rowcolor{gray!12}\multicolumn{11}{c}{\textit{Qwen3 Series}} \\
\midrule
- & 0.6B & \multirow{3}{*}{Zero-Shot} & 49.54 & 11.00 & 4.56 & N/A & 5.59 & 0.00 & 19.86 & N/A \\
- & 1.7B &  & 51.10 & 15.67 & 5.22 & N/A & 22.28 & 6.89 & 19.65 & N/A \\
- & 4B &  & 45.37 & 13.33 & 6.75 & N/A & 44.36 & 20.67 & 19.19 & N/A \\
\midrule
8B & - & GiGPO & 71.54 & 43.33 & 4.28 & N/A & 55.58 & 35.33 & 15.40 & N/A \\
\midrule
\multirow{15}{*}{\shortstack{8B-\\GiGPO}} & \multirow{5}{*}{0.6B} & Vanilla OPD & $\boldsymbol{69.29\pm 1.23}$ & $\underline{36.67\pm 2.08}$ & $4.32$ & $2.81\,\mathrm{h}\,(\times 1.00)$ & $35.35\pm 0.91$ & $10.00\pm 1.33$ & $20.45$ & $10.31\,\mathrm{h}\,(\times 1.00)$ \\
 &  & TCOD-F2B & $67.79\pm 1.57$ & $36.33\pm 1.15$ & $4.23$ & $2.42\,\mathrm{h}\,(\times 1.16)$ & $35.39\pm 1.02$ & $10.00\pm 1.33$ & $18.52$ & $6.30\,\mathrm{h}\,(\times 1.64)$ \\
 &  & TurnOPD & $67.61\pm 1.63$ & $36.33\pm 2.31$ & $4.31$ & $2.88\,\mathrm{h}\,(\times 0.97)$ & $35.45\pm 0.60$ & $10.22\pm 0.39$ & $19.58$ & $6.31\,\mathrm{h}\,(\times 1.63)$ \\
 &  & Ours w/o ID & $67.93\pm 1.11$ & $36.00\pm 1.00$ & $4.15$ & \cellcolor{yellow!25}$\boldsymbol{1.40\,\mathrm{h}\,(\times 2.01)}$ & $\boldsymbol{39.16\pm 0.27}$ & $\underline{11.11\pm 1.39}$ & $15.04$ & $\underline{2.15\,\mathrm{h}\,(\times 4.79)}$ \\
 &  & \textbf{ActFirst-OPD (Ours)} & $\underline{68.35\pm 0.44}$ & $\boldsymbol{38.00\pm 1.00}$ & $4.76$ & $\underline{1.46\,\mathrm{h}\,(\times 1.93)}$ & $\underline{37.06\pm 1.47}$ & $\boldsymbol{12.67\pm 1.77}$ & $18.15$ & \cellcolor{yellow!25}$\boldsymbol{2.01\,\mathrm{h}\,(\times 5.12)}$ \\
\cmidrule(lr){2-11}
 & \multirow{5}{*}{1.7B} & Vanilla OPD & $\underline{69.50\pm 0.43}$ & $\boldsymbol{40.33\pm 0.58}$ & $4.04$ & $3.10\,\mathrm{h}\,(\times 1.00)$ & $45.47\pm 1.71$ & $\underline{22.89\pm 2.14}$ & $15.24$ & $10.37\,\mathrm{h}\,(\times 1.00)$ \\
 &  & TCOD-F2B & $68.83\pm 0.59$ & $38.33\pm 1.53$ & $4.15$ & $2.67\,\mathrm{h}\,(\times 1.16)$ & $\underline{46.06\pm 0.26}$ & $22.45\pm 0.39$ & $15.77$ & $6.53\,\mathrm{h}\,(\times 1.59)$ \\
 &  & TurnOPD & $\underline{69.50\pm 0.64}$ & $\underline{39.67\pm 1.15}$ & $3.99$ & $2.77\,\mathrm{h}\,(\times 1.12)$ & $43.61\pm 1.62$ & $20.44\pm 0.77$ & $15.50$ & $6.37\,\mathrm{h}\,(\times 1.63)$ \\
 &  & Ours w/o ID & $68.44\pm 1.93$ & $38.67\pm 2.08$ & $4.06$ & \cellcolor{yellow!25}$\boldsymbol{1.70\,\mathrm{h}\,(\times 1.82)}$ & $43.38\pm 1.61$ & $20.00\pm 2.41$ & $15.05$ & $\underline{2.56\,\mathrm{h}\,(\times 4.06)}$ \\
 &  & \textbf{ActFirst-OPD (Ours)} & $\boldsymbol{69.76\pm 1.11}$ & $39.33\pm 1.53$ & $3.99$ & $\underline{1.72\,\mathrm{h}\,(\times 1.80)}$ & $\boldsymbol{46.57\pm 2.38}$ & $\boldsymbol{24.00\pm 1.76}$ & $15.50$ & \cellcolor{yellow!25}$\boldsymbol{2.16\,\mathrm{h}\,(\times 4.80)}$ \\
\cmidrule(lr){2-11}
 & \multirow{5}{*}{4B} & Vanilla OPD & $\boldsymbol{69.90\pm 1.12}$ & $\boldsymbol{42.00\pm 1.00}$ & $4.23$ & $3.96\,\mathrm{h}\,(\times 1.00)$ & $50.74\pm 0.13$ & $29.34\pm 1.15$ & $16.49$ & $12.40\,\mathrm{h}\,(\times 1.00)$ \\
 &  & TCOD-F2B & $69.25\pm 1.15$ & $\boldsymbol{42.00\pm 1.00}$ & $4.12$ & $3.51\,\mathrm{h}\,(\times 1.13)$ & $\underline{53.47\pm 2.14}$ & $\underline{30.22\pm 1.39}$ & $17.25$ & $8.47\,\mathrm{h}\,(\times 1.46)$ \\
 &  & TurnOPD & $\underline{69.64\pm 1.76}$ & $41.33\pm 2.52$ & $4.17$ & $3.52\,\mathrm{h}\,(\times 1.13)$ & $51.64\pm 1.53$ & $30.00\pm 3.06$ & $16.49$ & $7.93\,\mathrm{h}\,(\times 1.56)$ \\
 &  & Ours w/o ID & $69.07\pm 0.49$ & $\underline{41.67\pm 1.15}$ & $4.10$ & \cellcolor{yellow!25}$\boldsymbol{2.15\,\mathrm{h}\,(\times 1.84)}$ & $51.16\pm 1.60$ & $27.78\pm 2.53$ & $17.12$ & $\underline{3.25\,\mathrm{h}\,(\times 3.82)}$ \\
 &  & \textbf{ActFirst-OPD (Ours)} & $69.34\pm 0.45$ & $\boldsymbol{42.00\pm 1.00}$ & $4.18$ & $\underline{2.32\,\mathrm{h}\,(\times 1.71)}$ & $\boldsymbol{53.56\pm 1.25}$ & $\boldsymbol{31.33\pm 1.34}$ & $16.29$ & \cellcolor{yellow!25}$\boldsymbol{2.63\,\mathrm{h}\,(\times 4.72)}$ \\
\bottomrule
\end{tabular}%
}
\end{table}

\subsection{Ablation Studies}
We evaluate three ablation designs.
\textbf{Ours w/o ID} replaces reference-conditioned inverse dynamics
with direct action generation without reference next observations.
\textbf{w/o NAP Fallback} terminates rollouts after failed transition consistency checks instead of continuing with autonomous next-action prediction (NAP).
\textbf{Reference-Prefix Replay} executes either the first 50\% of each reference action sequence, followed by NAP, or the full sequence.
These variants examine reference conditioning, continuation after
divergence, and the use of reference actions for rollout collection,
respectively.

Table~\ref{tab:ablation_average_across_scales} reports results
averaged across student sizes.
ActFirst-OPD has the highest mean SR and lowest mean SR rank
on all three benchmarks.
Removing ID lowers mean SR by 17.31, 1.00, and 3.04 percentage
points on ALFWorld, WebShop, and ScienceWorld, respectively;
training time increases from 1.94 to 2.45 hours on ALFWorld
and from 2.27 to 2.65 hours on ScienceWorld.
Removing NAP fallback lowers mean SR by 1.05, 3.11,
and 2.82 percentage points, respectively, suggesting that
contexts collected after divergence remain useful for distillation.

Both reference-replay variants also lower mean SR on all three
benchmarks.
Compared with ActFirst-OPD, the 50\% variant increases training
time from 1.83 to 2.18 hours on WebShop and from 2.27 to 2.87 hours
on ScienceWorld.
Full replay reduces mean training time on all three benchmarks
but lowers mean SR by 0.36 percentage points on ALFWorld,
with larger drops of 1.89 and 4.22 points on WebShop and
ScienceWorld, respectively.
The larger gaps on WebShop and ScienceWorld may reflect a
greater benefit from learning to handle deviations from
reference trajectories.
Full replay follows fixed reference actions, whereas ActFirst-OPD
continues after divergence, collecting interaction contexts
that support distillation on feedback from student-generated actions.

\newcommand{\SRdrop}[2]{%
    \ensuremath{#1_{\textcolor{red!75!black}{\downarrow #2}}}}

\begin{table}[t]
\centering
\caption{\textbf{Ablation results averaged across Qwen3-0.6B, 1.7B, and 4B.}
Red subscripts show SR drops from ActFirst-OPD in percentage points.
Time denotes training wall-clock time.
Rank averages per-size SR ranks, using average ranks for ties.
\textbf{Bold} and \underline{underlined} values mark the best and
second-best results per benchmark and metric, respectively.}
\label{tab:ablation_average_across_scales}
\footnotesize
\setlength{\tabcolsep}{2.2pt}
\renewcommand{\arraystretch}{1.15}
\resizebox{\linewidth}{!}{%
\begin{tabular}{@{}lccc@{\hspace{7pt}}ccc@{\hspace{7pt}}ccc@{}}
\toprule
\multicolumn{1}{c}{\multirow{2}{*}{Method}}
& \multicolumn{3}{c}{ALFWorld}
& \multicolumn{3}{c}{WebShop}
& \multicolumn{3}{c}{ScienceWorld} \\
\cmidrule(lr){2-4}
\cmidrule(lr){5-7}
\cmidrule(lr){8-10}
& Avg. SR$\uparrow$ & Time (h)$\downarrow$ & Rank$\downarrow$
& Avg. SR$\uparrow$ & Time (h)$\downarrow$ & Rank$\downarrow$
& Avg. SR$\uparrow$ & Time (h)$\downarrow$ & Rank$\downarrow$ \\
\midrule

w/o ID
& \SRdrop{58.92}{17.31} & 2.45 & 4.67
& \SRdrop{38.78}{1.00}  & \underline{1.75} & 3.17
& \SRdrop{19.63}{3.04}  & 2.65 & 3.33 \\

w/o NAP Fallback
& \SRdrop{75.18}{1.05} & \underline{1.25} & 2.33
& \SRdrop{36.67}{3.11} & 2.09 & 4.50
& \SRdrop{\underline{19.85}}{2.82}
  & 2.33 & \underline{3.00} \\

Reference-Prefix Replay (50\%)
& \SRdrop{68.94}{7.29} & 1.90 & 4.33
& \SRdrop{\underline{39.11}}{0.67}
  & 2.18 & \underline{2.67}
& \SRdrop{18.89}{3.78} & 2.87 & 3.67 \\

Reference-Prefix Replay (100\%)
& \SRdrop{\underline{75.87}}{0.36}
  & \textbf{1.11} & \underline{2.00}
& \SRdrop{37.89}{1.89} & \textbf{1.66} & 3.17
& \SRdrop{18.45}{4.22} & \textbf{1.73} & 4.00 \\

\midrule
\textbf{ActFirst-OPD (Ours)}
& \textbf{76.23} & 1.94 & \textbf{1.67}
& \textbf{39.78} & 1.83 & \textbf{1.50}
& \textbf{22.67} & \underline{2.27} & \textbf{1.00} \\

\bottomrule
\end{tabular}%
}
\end{table}

\begin{figure}[t]
    \centering
    \includegraphics[width=\linewidth]{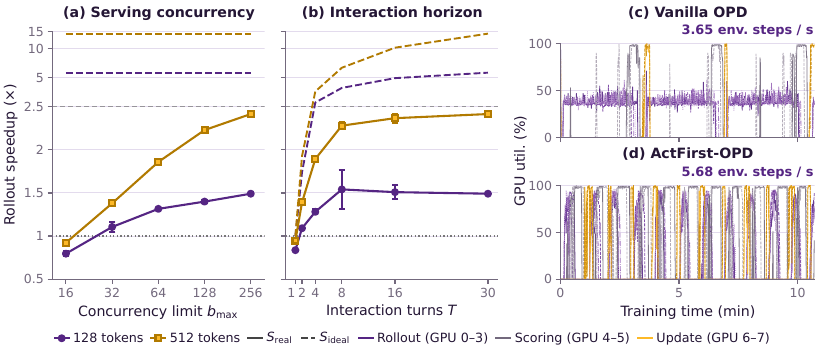}
    \caption{\textbf{Rollout efficiency on ALFWorld with Qwen3-1.7B.}
    (a,b) Measured rollout speedup (solid; mean $\pm$ standard deviation
    over three paired repeats) and idealized estimates (dashed),
    varying $b_{\max}$ at $T=30$ in (a) and $T$ at $b_{\max}=256$ in (b).
    Speedup axes are linear up to $2.5$ and logarithmic above.
    (c,d) GPU utilization for Vanilla OPD and ActFirst-OPD;
    annotations show full-training environment-transition throughput.}
    \label{fig:efficiency-analysis}
\end{figure}
\subsection{Rollout Efficiency and Quality Analysis}
\paragraph{Where Does the Speedup Come From?}
\label{sec:speedup-analysis}
The observed training speedup comes from both higher environment-transition throughput and fewer environment transitions.
Asynchronous full-response generation (Section~\ref{sec:asynchronous-full-response}) can increase throughput, while reference-conditioned inverse dynamics can improve task progress and reduce inefficient interactions.

To isolate the benefit of asynchronous generation, we use fixed ALFWorld interaction contexts with Qwen3-1.7B, 16 tasks per batch, 16-token fast actions, and 128- or 512-token full responses.
We vary the cap $b_{\max}$ on concurrent generation requests,
shared by fast-action and full-response requests,
and the number of virtual turns per task $T$, timing each rollout until all requests finish.
Rollout speedup is the think-then-act completion time divided by the ActFirst-OPD completion time.
Figure~\ref{fig:efficiency-analysis}(a) shows that increasing $b_{\max}$ from 16 to 256 raises speedup from $0.80\times$ to
$1.49\times$ for 128-token responses and from $0.92\times$ to
$2.42\times$ for 512-token responses.
At $b_{\max}=16$, both response lengths yield speedups below one, indicating that overlap does not offset the additional fast-action generation cost in this setting.
Increasing $b_{\max}$ enables speedups in this experiment,
but the required serving concurrency depends on the workload
and hardware.
The larger full-response to fast-action token ratio
(32 versus 8) provides more overlap opportunities and reduces
the relative decoding overhead of fast actions.
At $b_{\max}=256$, speedup initially grows with $T$ and then
levels off (Figure~\ref{fig:efficiency-analysis}(b)).
The initial increase and diminishing returns are consistent with amortizing the final
full-response cost in Proposition~\ref{prop:rollout-speedup}.
Limited serving concurrency can further constrain speedup
and contribute to the gap from the idealized estimates
(Appendix~\ref{app:finite-capacity}).

In 250-update training runs with matched rollout batch size and serving configuration, ActFirst-OPD increases environment-transition throughput from 3.65 to 5.68 transitions per second ($1.56\times$; Figure~\ref{fig:efficiency-analysis}(c,d)).
Across the full-training measurement windows, it also executes 30.45\% fewer transitions (38,616 versus 55,525; Table~\ref{tab:throughput-measurements}).
Since elapsed time equals transition count divided by throughput, the speedup over these windows is
$1.56/(1-0.3045)\approx2.24$.
This is close to the $2.28\times$ training speedup for Qwen3-1.7B
in Table~\ref{tab:main-alfworld}, measured from process launch
to exit in separate runs.

\begin{figure}[t]
    \centering
    \includegraphics[width=\linewidth]{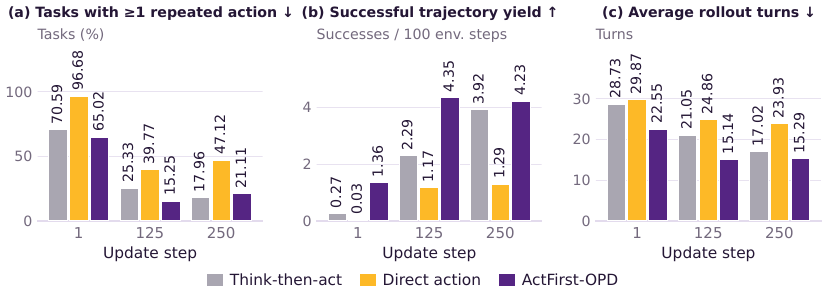}
    \caption{\textbf{Rollout quality on ALFWorld.}
    At each update, all strategies use the same frozen Qwen3-1.7B
    checkpoint on 3,553 training tasks with a 30-turn maximum rollout horizon.}
    \label{fig:rollout-quality}
\end{figure}
\paragraph{How Does ActFirst-OPD Affect Rollout Quality?}
We hold the student checkpoint fixed and vary only the rollout strategy: think-then-act, direct action without reference next observations, or ActFirst-OPD's reference-conditioned interaction with autonomous fallback.
At each of updates 1, 125, and 250, all strategies use the same frozen
Qwen3-1.7B Vanilla OPD checkpoint on all 3,553 ALFWorld training
tasks with a 30-turn limit.
We measure the fraction of tasks with at least one unproductive action repetition, successful trajectories per 100 environment transitions (successful trajectory yield), and mean rollout turns. 
Measurement protocols are detailed in Appendix~\ref{app:analysis-protocols}.

Compared with direct action, ActFirst-OPD improves rollout quality at every checkpoint, with a lower fraction of tasks exhibiting unproductive action repetition, higher successful trajectory yield, and fewer rollout turns (Figure~\ref{fig:rollout-quality}).
Compared with think-then-act, it yields more successful trajectories per 100 transitions and uses fewer turns, with broadly comparable task-level repetition.
Based on these results, reference-conditioned inverse dynamics may improve rollout quality.

\section{Conclusion}
We presented ActFirst-OPD, which combines reference-conditioned inverse dynamics, autonomous next-action prediction, and asynchronous full-response generation.
Across three benchmarks and three Qwen3 student sizes, it reduces training time relative to Vanilla OPD while matching or exceeding all compared OPD baselines in mean success rate across eight of nine settings.
These results highlight that reasoning need not block acting during multi-turn agent distillation.

Despite these gains, our method assumes access to high-quality offline reference trajectories; settings where such references are difficult to obtain are beyond the scope of this work.
Reference next observations currently serve only as rollout guidance, which is disabled after divergence.
Future work could use reference transitions for auxiliary supervision or realign rollouts with their reference trajectories after divergence.

\subsection*{AI use statement}

The authors used OpenAI language-model systems, including Codex,
as interactive assistants during this project.
These tools assisted with literature discovery, experimental design,
code development, data processing and visualization, mathematical
derivations and proof checking, and manuscript drafting and revision.
The human authors made the final research and methodological decisions
and take full responsibility for the manuscript, experimental results,
and theoretical claims.

\subsection*{Ethics statement}

This work studies efficient training of language agents using
established benchmarks: ALFWorld, WebShop, and ScienceWorld.
Our experiments involve benchmark environments and pretrained
language models, without human-participant studies or real-world
deployment.
Reducing training costs can broaden access to agent research,
but may also lower barriers to developing agents for harmful uses.
Student models may inherit biases, errors, or undesirable behaviors
from their teachers and reference trajectories.
Applications beyond the evaluated benchmarks therefore require
appropriate safety evaluation and controls on environment
and tool access.

\subsection*{Reproducibility statement}

Section~\ref{sec:experiments} describes the benchmarks, models,
computational resources, and evaluation protocols.
The assumptions and proofs underlying our efficiency analysis
are provided in Appendix~\ref{app:theoretical-analysis}.
Appendix~\ref{app:implementation-details} provides the prompt
templates, history management, thinking-budget decoding, OPD optimization,
and transition consistency checks.
Appendix~\ref{app:experiment-details} describes benchmark configurations,
teacher training and reference construction, training and evaluation
hyperparameters, and rollout measurement protocols.
The accompanying anonymous code repository is available at
\url{https://anonymous.4open.science/r/ActFirst-OPD}.



\bibliography{iclr2027_conference}

@article{thinkingmachineslab-opd,
  author       = {Kevin Lu and {Thinking Machines Lab}},
  title        = {On-Policy Distillation},
  year         = {2025},
  howpublished = {Thinking Machines Lab: Connectionism},
  note         = {https://thinkingmachines.ai/blog/on-policy-distillation}
}

@inproceedings{react,
  author       = {Shunyu Yao and
                  Jeffrey Zhao and
                  Dian Yu and
                  Nan Du and
                  Izhak Shafran and
                  Karthik R. Narasimhan and
                  Yuan Cao},
  title        = {{ReAct}: Synergizing Reasoning and Acting in Language Models},
  booktitle    = {International Conference on Learning Representations (ICLR)},
  year         = {2023}
}

@inproceedings{alfworld,
  author       = {Mohit Shridhar and
                  Xingdi Yuan and
                  Marc{-}Alexandre C{\^{o}}t{\'{e}} and
                  Yonatan Bisk and
                  Adam Trischler and
                  Matthew J. Hausknecht},
  title        = {{ALFWorld}: Aligning Text and Embodied Environments for Interactive
                  Learning},
  booktitle    = {International Conference on Learning Representations (ICLR)},
  year         = {2021}
}

@inproceedings{webshop,
  author       = {Shunyu Yao and
                  Howard Chen and
                  John Yang and
                  Karthik Narasimhan},
  title        = {{WebShop}: Towards Scalable Real-World Web Interaction with Grounded
                  Language Agents},
  booktitle    = {Advances in Neural Information Processing Systems (NeurIPS)},
  volume       = {35},
  year         = {2022}
}

@inproceedings{scienceworld,
  author       = {Ruoyao Wang and
                  Peter A. Jansen and
                  Marc{-}Alexandre C{\^{o}}t{\'{e}} and
                  Prithviraj Ammanabrolu},
  title        = {{ScienceWorld}: Is your Agent Smarter than a 5th Grader?},
  booktitle    = {Proceedings of the 2022 Conference on Empirical Methods in Natural Language Processing (EMNLP)},
  pages        = {11279--11298},
  year         = {2022}
}

@inproceedings{minillm,
  author       = {Yuxian Gu and
                  Li Dong and
                  Furu Wei and
                  Minlie Huang},
  title        = {{MiniLLM}: Knowledge Distillation of Large Language Models},
  booktitle    = {International Conference on Learning Representations (ICLR)},
  year         = {2024}
}

@inproceedings{gkd,
  author       = {Rishabh Agarwal and
                  Nino Vieillard and
                  Yongchao Zhou and
                  Piotr Stanczyk and
                  Sabela Ramos Garea and
                  Matthieu Geist and
                  Olivier Bachem},
  title        = {On-Policy Distillation of Language Models: Learning from Self-Generated
                  Mistakes},
  booktitle    = {International Conference on Learning Representations (ICLR)},
  year         = {2024}
}

@inproceedings{tcod,
  author    = {Jiaqi Wang and
                  Wenhao Zhang and
                  Weijie Shi and
                  Yaliang Li and
                  James Cheng},
  title     = {Exploring Temporal Curriculum in On-Policy Distillation for Multi-turn Autonomous Agents},
  booktitle = {Proceedings of the 3rd Conference on Language Modeling (COLM)},
  year      = {2026}
}

@article{turnopd,
  author       = {Yuhang Zhou and
                  Kai Zheng and
                  Haoling Li and
                  Dengyun Peng and
                  Can Xu and
                  Jingjing Chen},
  title        = {{TurnOPD}: Making On-Policy Distillation Turn-Aware for Efficient Long-Horizon
                  Agent Training},
  journal      = {arXiv preprint arXiv:2607.05804},
  year         = {2026}
}

@article{reopd,
  author       = {Baohao Liao and
                  Hanze Dong and
                  Christof Monz and
                  Xinxing Xu and
                  Li Dong and
                  Furu Wei},
  title        = {Multi-Turn On-Policy Distillation with Prefix Replay},
  journal      = {arXiv preprint arXiv:2607.04763},
  year         = {2026},
}

@article{ridm,
  author       = {Brahma S. Pavse and
                  Faraz Torabi and
                  Josiah Hanna and
                  Garrett Warnell and
                  Peter Stone},
  title        = {{RIDM}: Reinforced Inverse Dynamics Modeling for Learning from a Single
                  Observed Demonstration},
  journal      = {{IEEE} Robotics and Automation Letters},
  volume       = {5},
  number       = {4},
  pages        = {6262--6269},
  year         = {2020}
}

@article{rethinking-opd,
  author       = {Yaxuan Li and
                  Yuxin Zuo and
                  Bingxiang He and
                  Jinqian Zhang and
                  Chaojun Xiao and
                  Cheng Qian and
                  Tianyu Yu and
                  Huan{-}ang Gao and
                  Wenkai Yang and
                  Zhiyuan Liu and
                  Ning Ding},
  title        = {Rethinking On-Policy Distillation of Large Language Models: Phenomenology,
                  Mechanism, and Recipe},
  journal      = {arXiv preprint arXiv:2604.13016},
  year         = {2026}
}

@article{qwen3,
  author       = {{Qwen Team}},
  title        = {{Qwen3} Technical Report},
  journal      = {arXiv preprint arXiv:2505.09388},
  year         = {2025}
}

@article{deepseekv4,
  author       = {{DeepSeek{-}AI}},
  title        = {{DeepSeek-V4}: Towards Highly Efficient Million-Token Context Intelligence},
  journal      = {arXiv preprint arXiv:2606.19348},
  year         = {2026}
}

@article{opd-survey,
  author       = {Mingyang Song and
                  Mao Zheng},
  title        = {A Survey of On-Policy Distillation for Large Language Models},
  journal      = {arXiv preprint arXiv:2604.00626},
  year         = {2026}
}

@inproceedings{gigpo,
  author       = {Lang Feng and
                  Zhenghai Xue and
                  Tingcong Liu and
                  Bo An},
  title        = {Group-in-Group Policy Optimization for {LLM} Agent Training},
  booktitle    = {Advances in Neural Information Processing Systems (NeurIPS)},
  volume       = {38},
  year         = {2025}
}

@inproceedings{simpletir,
  author       = {Zhenghai Xue and
                  Longtao Zheng and
                  Qian Liu and
                  Yingru Li and
                  Xiaosen Zheng and
                  Zejun Ma and
                  Bo An},
  title        = {{SimpleTIR}: End-to-End Reinforcement Learning for Multi-Turn Tool-Integrated
                  Reasoning},
  booktitle={International Conference on Learning Representations (ICLR)},
  year={2026}
}

@article{codeasharness,
  author       = {Xuying Ning and
                  Katherine Tieu and
                  Dongqi Fu and
                  Tianxin Wei and
                  Zihao Li and
                  Yuanchen Bei and
                  Jiaru Zou and
                  Mengting Ai and
                  Zhining Liu and
                  Ting{-}Wei Li and
                  Lingjie Chen and
                  Yanjun Zhao and
                  Ke Yang and
                  Bingxuan Li and
                  Cheng Qian and
                  Gaotang Li and
                  Xiao Lin and
                  Zhichen Zeng and
                  Ruizhong Qiu and
                  Sirui Chen and
                  Yifan Sun and
                  Xiyuan Yang and
                  Ruida Wang and
                  Rui Pan and
                  Chenyuan Yang and
                  Dylan Zhang and
                  Liri Fang and
                  Zikun Cui and
                  Yang Cao and
                  Pan Chen and
                  Dorothy Sun and
                  Ren Chen and
                  Mahesh Srinivasan and
                  Nipun Mathur and
                  Yinglong Xia and
                  Hong Li and
                  Hong Yan and
                  Pan Lu and
                  Lingming Zhang and
                  Tong Zhang and
                  Hanghang Tong and
                  Jingrui He},
  title        = {Code as Agent Harness},
  journal      = {arXiv preprint arXiv:2605.18747},
  year         = {2026}
}

@article{claudecode,
  author       = {{Anthropic}},
  title        = {{Claude 3.7 Sonnet and Claude Code}},
  year         = {2025},
  note          = {https://www.anthropic.com/news/claude-3-7-sonnet}
}

@article{codex,
  author       = {{OpenAI}},
  title        = {Introducing {Codex}},
  year         = {2025},
  note          = {https://openai.com/index/introducing-codex/}
}

@article{agentharness,
  author       = {{Anthropic}},
  title        = {Effective Harnesses for Long-Running Agents},
  year         = {2025},
  note          = {https://www.anthropic.com/engineering/effective-harnesses-for-long-running-agents}
}

@inproceedings{pathak2017curiosity,
  author       = {Deepak Pathak and
                  Pulkit Agrawal and
                  Alexei A. Efros and
                  Trevor Darrell},
  title        = {Curiosity-driven Exploration by Self-supervised Prediction},
  booktitle    = {Proceedings of the 34th International Conference on Machine Learning (ICML)},
  volume       = {70},
  pages        = {2778--2787},
  year         = {2017}
}

@inproceedings{bco,
  author       = {Faraz Torabi and
                  Garrett Warnell and
                  Peter Stone},
  title        = {Behavioral Cloning from Observation},
  booktitle    = {Proceedings of the 27th International Joint Conference on
                  Artificial Intelligence (IJCAI)},
  pages        = {4950--4957},
  year         = {2018}
}

@inproceedings{vpp,
  author       = {Yucheng Hu and
                  Yanjiang Guo and
                  Pengchao Wang and
                  Xiaoyu Chen and
                  Yen{-}Jen Wang and
                  Jianke Zhang and
                  Koushil Sreenath and
                  Chaochao Lu and
                  Jianyu Chen},
  title        = {{Video Prediction Policy}: A Generalist Robot Policy with Predictive
                  Visual Representations},
  booktitle    = {Proceedings of the 42nd International Conference on Machine Learning (ICML)},
  volume       = {267},
  pages        = {24328--24346},
  year         = {2025}
}

@article{vera,
  author       = {Sizhe Lester Li and
                  Evan Kim and
                  Xingjian Bai and
                  Tong Zhao and
                  Tao Pang and
                  Max Simchowitz and
                  Vincent Sitzmann},
  title        = {Turning Video Models into Generalist Robot Policies},
  journal      = {arXiv preprint arXiv:2605.27817},
  year         = {2026}
}

@article{trinityrft,
  author       = {Xuchen Pan and
                  Yanxi Chen and
                  Yushuo Chen and
                  Yuchang Sun and
                  Daoyuan Chen and
                  Wenhao Zhang and
                  Yuexiang Xie and
                  Yilun Huang and
                  Yilei Zhang and
                  Dawei Gao and
                  Weijie Shi and
                  Yaliang Li and
                  Bolin Ding and
                  Jingren Zhou},
  title        = {{Trinity-RFT}: A General-Purpose and Unified Framework for Reinforcement
                  Fine-Tuning of Large Language Models},
  journal      = {arXiv preprint arXiv:2505.17826},
  year         = {2025}
}

@inproceedings{vllm,
  author       = {Woosuk Kwon and
                  Zhuohan Li and
                  Siyuan Zhuang and
                  Ying Sheng and
                  Lianmin Zheng and
                  Cody Hao Yu and
                  Joseph E. Gonzalez and
                  Hao Zhang and
                  Ion Stoica},
  title        = {Efficient Memory Management for Large Language Model Serving with
                  {PagedAttention}},
  booktitle    = {Proceedings of the 29th Symposium on Operating Systems Principles (SOSP)},
  pages        = {611--626},
  year         = {2023}
}

@inproceedings{verl,
  author       = {Guangming Sheng and
                  Chi Zhang and
                  Zilingfeng Ye and
                  Xibin Wu and
                  Wang Zhang and
                  Ru Zhang and
                  Yanghua Peng and
                  Haibin Lin and
                  Chuan Wu},
  title        = {{HybridFlow}: A Flexible and Efficient {RLHF} Framework},
  booktitle    = {Proceedings of the 20th European Conference on Computer Systems (EuroSys)},
  pages        = {1279--1297},
  year         = {2025}
}

@inproceedings{adamw,
  author       = {Ilya Loshchilov and
                  Frank Hutter},
  title        = {Decoupled Weight Decay Regularization},
  booktitle    = {International Conference on Learning Representations (ICLR)},
  year         = {2019}
}

@article{ppo,
  author       = {John Schulman and
                  Filip Wolski and
                  Prafulla Dhariwal and
                  Alec Radford and
                  Oleg Klimov},
  title        = {Proximal Policy Optimization Algorithms},
  journal      = {arXiv preprint arXiv:1707.06347},
  year         = {2017},
}

@inproceedings{dualclip,
  author       = {Deheng Ye and
                  Zhao Liu and
                  Mingfei Sun and
                  Bei Shi and
                  Peilin Zhao and
                  Hao Wu and
                  Hongsheng Yu and
                  Shaojie Yang and
                  Xipeng Wu and
                  Qingwei Guo and
                  Qiaobo Chen and
                  Yinyuting Yin and
                  Hao Zhang and
                  Tengfei Shi and
                  Liang Wang and
                  Qiang Fu and
                  Wei Yang and
                  Lanxiao Huang},
  title        = {Mastering Complex Control in {MOBA} Games with Deep Reinforcement
                  Learning},
  booktitle    = {Proceedings of the 34th {AAAI} Conference on Artificial Intelligence (AAAI)},
  pages        = {6672--6679},
  year         = {2020}
}

@article{ulysses,
  author       = {Sam Ade Jacobs and
                  Masahiro Tanaka and
                  Chengming Zhang and
                  Minjia Zhang and
                  Shuaiwen Leon Song and
                  Samyam Rajbhandari and
                  Yuxiong He},
  title        = {{DeepSpeed Ulysses}: System Optimizations for Enabling Training of Extreme
                  Long Sequence Transformer Models},
  journal      = {arXiv preprint arXiv:2309.14509},
  year         = {2023}
}

@inproceedings{textworld,
  author       = {C{\^o}t{\'e}, Marc-Alexandre and
                  K{\'a}d{\'a}r, {\'A}kos and
                  Yuan, Xingdi and
                  Kybartas, Ben and
                  Barnes, Tavian and
                  Fine, Emery and
                  Moore, James and
                  Hausknecht, Matthew and
                  El Asri, Layla and
                  Adada, Mahmoud and
                  Tay, Wendy and
                  Trischler, Adam},
  title        = {{TextWorld}: A Learning Environment for Text-Based Games},
  booktitle    = {Computer Games},
  series       = {Communications in Computer and Information Science},
  volume       = {1017},
  pages        = {41--75},
  year         = {2019}
}

@article{sageopd,
  author       = {Yuhang Zhou and
                  Lizhu Zhang and
                  Yifan Wu and
                  Mingyi Wang and
                  Bo Peng and
                  Jiayi Liu and
                  Xiangjun Fan and
                  Zhuokai Zhao},
  title        = {{SAGE-OPD}: Selective Agent-Guided Intervention for Multi-Turn On-Policy Distillation},
  journal      = {arXiv preprint arXiv:2606.19659},
  year         = {2026}
}

@article{guidedopd,
  author       = {Gengsheng Li and
                  Mao Zheng and
                  Mingyang Song and
                  Ruiqi Liu and
                  Tianyu Yang and
                  Jie Sun and
                  Qiyong Zhong and
                  Haiyun Guo and
                  Junfeng Fang and
                  Dan Zhang and
                  Jinqiao Wang},
  title        = {On-Policy Distillation with Curriculum Turn-level Guidance for Multi-turn Agents},
  journal      = {arXiv preprint arXiv:2606.15912},
  year         = {2026}
}

@article{npd,
  author       = {Miao Rang and
                  Zhenni Bi and
                  Hang Zhou and
                  Kai Han and
                  Xuechun Wang and
                  An Xiao and
                  Xinghao Chen and
                  Yunhe Wang and
                  Hanting Chen},
  title        = {Near-Policy: Accelerating On-Policy Distillation via Asynchronous Generation and Selective Packing},
  journal      = {arXiv preprint arXiv:2605.05940},
  year         = {2026}
}

@inproceedings{sad,
  author       = {Jun Liu and
                  Zhenglun Kong and
                  Peiyan Dong and
                  Changdi Yang and
                  Tianqin Li and
                  Yanyue Xie and
                  Yifan Gong and
                  Xuan Shen and
                  Pu Zhao and
                  Hao Tang and
                  Geng Yuan and
                  Wei Niu and
                  Wenbin Zhang and
                  Xue Lin and
                  Dong Huang and
                  Yanzhi Wang},
  title        = {Structured Agent Distillation for Large Language Model Agents},
  booktitle    = {Proceedings of the 25th International Conference on Autonomous Agents and Multiagent Systems (AAMAS)},
  pages        = {3676--3685},
  year         = {2026}
}

@article{lightningopd,
  author       = {Yecheng Wu and
                  Song Han and
                  Hai Cai},
  title        = {Lightning {OPD}: Efficient Post-Training for Large Reasoning Models with Offline On-Policy Distillation},
  journal      = {arXiv preprint arXiv:2604.13010},
  year         = {2026},
}

@article{mopd,
  author       = {Weichen Yu and
                  Xiaomin Li and
                  Yizhou Zhao and
                  Xiaoze Liu and
                  Ruowang Zhang and
                  Haixin Wang and
                  Yinyi Luo and
                  Chen Henry Wu and
                  Gaurav Mittal and
                  Matt Fredrikson and
                  Yu Hu},
  title        = {Multi-Rollout On-Policy Distillation via Peer Successes and Failures},
  journal      = {arXiv preprint arXiv:2605.12652},
  year         = {2026}
}
\bibliographystyle{iclr2027_conference}

\clearpage

\appendix

\section{Theoretical Analysis}
\label{app:theoretical-analysis}

\subsection{Proof of Proposition~\ref{prop:rollout-speedup}}
\label{app:proof-rollout-speedup}

\begin{proof}
We measure time from the availability of the initial context $\vc_1$.
Throughout the proof, $T\ge 1$ and
$0<\ell_{\mathrm{fast}}\le\ell_{\mathrm{full}}$.
Under the assumptions of Proposition~\ref{prop:rollout-speedup},
environment transitions and other non-generation operations take
negligible time, and overlapping requests do not increase their
generation latencies.

In standard think-then-act interaction, the next context becomes
available only after the current full response has been generated
and its action executed.
Since each response takes $\ell_{\mathrm{full}}$, completing
$T$ turns requires
\begin{equation}
    C_{\mathrm{Vanilla}}
    =
    T\ell_{\mathrm{full}}.
    \label{eq:proof-vanilla-time}
\end{equation}

For ActFirst-OPD, let $s_t$ denote the time at which context $\vc_t$
becomes available.
Both the fast-action request and the full-response request start
at $s_t$.
Only the fast action is required to obtain the next context, giving
\begin{equation}
    s_1=0,
    \qquad
    s_{t+1}=s_t+\ell_{\mathrm{fast}}
    \quad (1\le t<T).
\end{equation}
It follows that
\begin{equation}
    s_t=(t-1)\ell_{\mathrm{fast}}.
    \label{eq:proof-context-release}
\end{equation}
Environment interaction finishes when the final fast action is
executed, at time $T\ell_{\mathrm{fast}}$.
The full response for turn $t$ finishes at
$s_t+\ell_{\mathrm{full}}$.
Thus, completing both interaction and all full responses requires
\begin{align}
    C_{\mathrm{ActFirst}}^{\mathrm{ideal}}
    &=
    \max\left\{
        T\ell_{\mathrm{fast}},
        \max_{1\le t\le T}
        \bigl(s_t+\ell_{\mathrm{full}}\bigr)
    \right\}
    \nonumber\\
    &=
    \max\left\{
        T\ell_{\mathrm{fast}},
        (T-1)\ell_{\mathrm{fast}}+\ell_{\mathrm{full}}
    \right\}
    \nonumber\\
    &=
    (T-1)\ell_{\mathrm{fast}}+\ell_{\mathrm{full}},
    \label{eq:proof-actfirst-time}
\end{align}
where the last equality follows from
$\ell_{\mathrm{full}}\ge\ell_{\mathrm{fast}}$.

Taking the ratio of
Eqs.~\ref{eq:proof-vanilla-time}
and~\ref{eq:proof-actfirst-time} yields
\begin{equation}
    S_{\mathrm{ideal}}(T)
    =
    \frac{T\ell_{\mathrm{full}}}
    {(T-1)\ell_{\mathrm{fast}}+\ell_{\mathrm{full}}}.
    \label{eq:proof-speedup}
\end{equation}

Since $T\ge 1$, the denominator satisfies
\begin{equation}
    (T-1)\ell_{\mathrm{fast}}+\ell_{\mathrm{full}}
    \ge \ell_{\mathrm{full}},
\end{equation}
which gives
\begin{equation}
    S_{\mathrm{ideal}}(T)
    \le
    \frac{T\ell_{\mathrm{full}}}{\ell_{\mathrm{full}}}
    = T.
    \label{eq:proof-turn-bound}
\end{equation}
Moreover, the assumption
$\ell_{\mathrm{full}}\ge\ell_{\mathrm{fast}}$ implies
\begin{equation}
    (T-1)\ell_{\mathrm{fast}}+\ell_{\mathrm{full}}
    \ge
    (T-1)\ell_{\mathrm{fast}}+\ell_{\mathrm{fast}}
    =
    T\ell_{\mathrm{fast}}.
\end{equation}
Therefore,
\begin{equation}
    S_{\mathrm{ideal}}(T)
    \le
    \frac{T\ell_{\mathrm{full}}}{T\ell_{\mathrm{fast}}}
    =
    \frac{\ell_{\mathrm{full}}}{\ell_{\mathrm{fast}}}.
    \label{eq:proof-latency-bound}
\end{equation}
The denominator in Eq.~\ref{eq:proof-speedup} is also at most
$T\ell_{\mathrm{full}}$, so $S_{\mathrm{ideal}}(T)\ge 1$.
Combining this lower bound with the upper bounds in
Eqs.~\ref{eq:proof-turn-bound} and~\ref{eq:proof-latency-bound}
gives the following range for the idealized rollout speedup:
\begin{equation}
    1
    \le
    S_{\mathrm{ideal}}(T)
    \le
    \min\left\{
        T,\,
        \frac{\ell_{\mathrm{full}}}{\ell_{\mathrm{fast}}}
    \right\}.
\end{equation}

Finally, rewriting Eq.~\ref{eq:proof-speedup} gives
\begin{equation}
    S_{\mathrm{ideal}}(T)
    =
    \left(
        \frac{\ell_{\mathrm{fast}}}{\ell_{\mathrm{full}}}
        +
        \frac{
            1-\ell_{\mathrm{fast}}/\ell_{\mathrm{full}}
        }{T}
    \right)^{-1}.
\end{equation}
Since
$0<\ell_{\mathrm{fast}}/\ell_{\mathrm{full}}\le 1$,
the expression inside parentheses is nonincreasing in $T$.
Hence, $S_{\mathrm{ideal}}(T)$ is nondecreasing in $T$, with
\begin{equation}
    \lim_{T\to\infty}S_{\mathrm{ideal}}(T)
    =
    \frac{\ell_{\mathrm{full}}}{\ell_{\mathrm{fast}}}.
    \label{eq:ideal-rollout-speedup-limit}
\end{equation}
\end{proof}

\subsection{Finite-Concurrency Constraints}
\label{app:finite-capacity}

We analyze how serving concurrency, full-response length,
and rollout horizon constrain realized rollout speedup
under fixed generation workloads.
Consider $N$ tasks, each containing $T$ turns, served by a
fixed pool of inference resources.
Let $b_{\max}$ denote the cap on concurrent generation requests,
shared by fast-action and full-response requests across this pool.
All initial contexts are available at time zero, and environment
transitions and other non-generation operations take negligible time.
At each turn, Vanilla OPD generates one full response before acting.
ActFirst-OPD generates one fast action for environment interaction
and asynchronously generates one full response from the
interaction context.

Each admitted request occupies one concurrency slot until completion.
We assume that $0<\ell_{\mathrm{fast}}\le\ell_{\mathrm{full}}$
lower-bound the corresponding times spent holding a slot,
including prefill and decoding but excluding waiting for admission.
Both methods use the same resources and concurrency cap.
All completion times below correspond to the given $b_{\max}$;
this dependence is suppressed in the notation.

\paragraph{Dependency constraint.}
Let $s_{i,t}$ denote the availability time of the context for
task $i$ at turn $t$, with $s_{i,1}=0$.
The next context requires completion of the current fast action, so
\begin{equation}
    s_{i,t+1}
    \ge
    s_{i,t}+\ell_{\mathrm{fast}},
    \qquad
    s_{i,t}\ge(t-1)\ell_{\mathrm{fast}}.
\end{equation}
The final full response cannot finish before
$s_{i,T}+\ell_{\mathrm{full}}$.
Thus, the batch completion time, measured until both environment
interaction and all full responses have finished, satisfies
\begin{equation}
    C_{\mathrm{ActFirst}}
    \ge
    (T-1)\ell_{\mathrm{fast}}+\ell_{\mathrm{full}}.
    \label{eq:capacity-dependency-bound}
\end{equation}

\paragraph{Concurrency constraint.}
Let $n(u)$ denote the number of admitted, unfinished requests
at time $u$, so $n(u)\le b_{\max}$.
ActFirst-OPD generates $NT$ fast actions and $NT$ full responses.
Their total time occupying concurrency slots therefore satisfies
\begin{equation}
    NT(\ell_{\mathrm{fast}}+\ell_{\mathrm{full}})
    \le
    \int_0^{C_{\mathrm{ActFirst}}} n(u)\,du
    \le
    b_{\max}C_{\mathrm{ActFirst}}.
\end{equation}
Consequently,
\begin{equation}
    C_{\mathrm{ActFirst}}
    \ge
    \frac{NT(\ell_{\mathrm{fast}}+\ell_{\mathrm{full}})}
         {b_{\max}}.
    \label{eq:capacity-concurrency-bound}
\end{equation}
Combining Eqs.~\ref{eq:capacity-dependency-bound}
and~\ref{eq:capacity-concurrency-bound} gives
\begin{equation}
    C_{\mathrm{ActFirst}}
    \ge
    \max\left\{
        (T-1)\ell_{\mathrm{fast}}+\ell_{\mathrm{full}},
        \frac{NT(\ell_{\mathrm{fast}}+\ell_{\mathrm{full}})}
             {b_{\max}}
    \right\}.
    \label{eq:finite-capacity-time-bound}
\end{equation}

\paragraph{Implications for speedup.}
Let $C_{\mathrm{Vanilla}}$ denote the completion time of the
corresponding Vanilla OPD rollout batch.
The realized rollout speedup satisfies
\begin{equation}
    S_{\mathrm{real}}
    =
    \frac{C_{\mathrm{Vanilla}}}{C_{\mathrm{ActFirst}}}
    \le
    \min\left\{
        \frac{C_{\mathrm{Vanilla}}}
             {(T-1)\ell_{\mathrm{fast}}+\ell_{\mathrm{full}}},
        \frac{b_{\max}C_{\mathrm{Vanilla}}}
             {NT(\ell_{\mathrm{fast}}+\ell_{\mathrm{full}})}
    \right\}.
    \label{eq:finite-capacity-speedup}
\end{equation}
This bound retains the actual baseline completion time under
the same concurrency cap.

When Vanilla OPD attains the ideal completion time
$C_{\mathrm{Vanilla}}=T\ell_{\mathrm{full}}$,
Eq.~\ref{eq:finite-capacity-speedup} simplifies to
\begin{equation}
    S_{\mathrm{real}}
    \le
    \min\left\{
        S_{\mathrm{ideal}}(T),
        \frac{b_{\max}}
             {N(1+\ell_{\mathrm{fast}}/\ell_{\mathrm{full}})}
    \right\}.
    \label{eq:concurrency-speedup-bound}
\end{equation}
Under this baseline assumption, a necessary condition for
$S_{\mathrm{real}}>1$ is
\begin{equation}
    b_{\max}
    >
    N\left(1+\frac{\ell_{\mathrm{fast}}}{\ell_{\mathrm{full}}}\right).
    \label{eq:concurrency-speedup-condition}
\end{equation}
This condition is not sufficient for speedup.
The first term in Eq.~\ref{eq:concurrency-speedup-bound}
captures the dependence on rollout horizon $T$, while the second
limits the gain under a given concurrency cap $b_{\max}$.
Both depend on the latency ratio
$\ell_{\mathrm{fast}}/\ell_{\mathrm{full}}$.
Under this baseline assumption, limited concurrency can prevent
asynchronous overlap from offsetting the additional fast-action
generation cost.
Increasing $b_{\max}$ relaxes this constraint,
but does not guarantee speedup.

Full-response length affects these bounds through
$\ell_{\mathrm{full}}$; no proportionality between token counts
and generation latencies is assumed.
These bounds motivate the controlled analysis in
Section~\ref{sec:speedup-analysis}, which varies $b_{\max}$,
full-response length, and rollout horizon $T$,
with $N=16$ tasks per batch.

\section{Implementation Details}
\label{app:implementation-details}

This appendix details prompt construction, history management, thinking-budget decoding, OPD optimization, and benchmark-specific transition consistency checks.
Agent inputs are constructed from the task instruction $q$, history $h_t$, and current observation $o_t$, using benchmark-specific templates for full-response generation, reference-conditioned inverse dynamics, and autonomous next-action prediction.
Reference next observations are included only in inverse-dynamics requests.

\subsection{Prompt Details}
\label{app:prompts}

For each benchmark, we show the user templates with history for three
generation modes: full-response generation, next-action
prediction, and reference-conditioned inverse dynamics.
Each template includes the task instruction, retained observation--action
history, current observation, and benchmark-specific admissible action information.
Only the inverse-dynamics template includes a reference next observation.

\paragraph{ALFWorld.}\mbox{}\par
\begin{agentprompt}{PromptReason}{ALFWorld: Full-response generation}
You are an expert agent operating in the ALFRED Embodied Environment. Your task is to: {task_description}
Prior to this step, you have already taken {step_count} step(s). Below are the most recent {history_length} observations and the corresponding actions you took: {action_history}
You are now at step {current_step} and your current observation is: {current_observation}
Your admissible actions of the current situation are: [{admissible_actions}].

Now it's your turn to take an action.
You should first reason step-by-step about the current situation. This reasoning process MUST be enclosed within <think> </think> tags.
Once you've finished your reasoning, you should choose an admissible action for current step and present it within <action> </action> tags.
\end{agentprompt}

\begin{agentprompt}{PromptFast}{ALFWorld: Next-action prediction}
You are an expert agent operating in the ALFRED Embodied Environment. Your task is to: {task_description}
Prior to this step, you have already taken {step_count} step(s). Below are the most recent {history_length} observations and the corresponding actions you took: {action_history}
You are now at step {current_step} and your current observation is: {current_observation}
Your admissible actions of the current situation are: [{admissible_actions}].

Now it's your turn to take an action.
You should choose an admissible action for current step and present it within <action> </action> tags.
\end{agentprompt}

\begin{agentprompt}{PromptReference}{ALFWorld: Reference-conditioned inverse dynamics}
You are an expert agent operating in the ALFRED Embodied Environment. Your task is to: {task_description}
Prior to this step, you have already taken {step_count} step(s). Below are the most recent {history_length} observations and the corresponding actions you took: {action_history}
You are now at step {current_step} and your current observation is: {current_observation}

Complete the missing current action in this reference transition:
Current observation --[MISSING CURRENT ACTION]--> reference next observation.

The reference next observation is the environment state immediately after the missing current action. It may be a terminal state.
Reference next observation:
{reference_next_observation}

The reference next observation is the direct result of exactly one action. Do not insert an intermediate or preparatory action.

Your admissible actions of the current situation are: [{admissible_actions}].

Now it's your turn to take one action for the current step. Predict the MISSING CURRENT ACTION that leads to the reference next observation.
You should choose one admissible action and present it within <action> </action> tags.
\end{agentprompt}

\paragraph{WebShop.}\mbox{}\par

\begin{agentprompt}{PromptReason}{WebShop: Full-response generation}
You are an expert autonomous agent operating in the WebShop e-commerce environment.
Your task is to: {task_description}.
Prior to this step, you have already taken {step_count} step(s). Below are the most recent {history_length} observations and the corresponding actions you took: {action_history}
You are now at step {current_step} and your current observation is: {current_observation}.
Your admissible actions of the current situation are:
[
{available_actions}
].

Now it's your turn to take one action for the current step.
You should first reason step-by-step about the current situation, then think carefully which admissible action best advances the shopping goal. This reasoning process MUST be enclosed within <think> </think> tags.
Once you've finished your reasoning, you should choose an admissible action for current step and present it within <action> </action> tags.
\end{agentprompt}

\begin{agentprompt}{PromptFast}{WebShop: Next-action prediction}
You are an expert autonomous agent operating in the WebShop e-commerce environment.
Your task is to: {task_description}.
Prior to this step, you have already taken {step_count} step(s). Below are the recent {history_length} observations and the corresponding actions you took:
{action_history}

You are now at step {current_step} and your current observation is:
{current_observation}

Your admissible actions of the current situation are:
[
{available_actions}
].

Now it's your turn to take one action for the current step.
You should choose an admissible action for current step and present it within <action> </action> tags.
\end{agentprompt}

\begin{agentprompt}{PromptReference}{WebShop: Reference-conditioned inverse dynamics}
You are an expert autonomous agent operating in the WebShop e-commerce environment.
Your task is to: {task_description}.
Prior to this step, you have already taken {step_count} step(s). Below are the recent {history_length} observations and the corresponding actions you took:
{action_history}

You are now at step {current_step} and your current observation is:
{current_observation}

Complete the missing current action in this reference transition:
Current observation --[MISSING CURRENT ACTION]--> reference next observation.

The reference next observation is the environment state immediately after the missing current action. It may be a terminal Score page.
Reference next observation:
{reference_next_observation}

If the missing current action is search[...], reconstruct the query most likely to produce the reference next observation. Do not add price or budget terms to the query.

Your admissible actions of the current situation are:
[
{available_actions}
].

Now it's your turn to take one action for the current step. Predict the MISSING CURRENT ACTION that leads to the reference next observation.
You should choose one admissible action and present it within <action> </action> tags.
\end{agentprompt}

\paragraph{ScienceWorld.}\mbox{}\par

\begin{agentprompt}{PromptReason}{ScienceWorld: Full-response generation}
Your ScienceWorld task is: {task_description}
Prior to this step, you have already taken {step_count} step(s). Below are the most recent {history_length} observations and the corresponding actions you took: {action_history}
You are now at step {current_step} and your current observation is: {current_observation}
Available action commands: [{action_templates}]
Available objects you can interact with: [{objects}]

Now it's your turn to take an action. Combine an action command with appropriate object(s) to form a valid action.
You should first reason step-by-step about the current situation. This reasoning process MUST be enclosed within <think> </think> tags.
Once you've finished your reasoning, you should choose a valid action for the current step and present it within <action> </action> tags.
\end{agentprompt}

\begin{agentprompt}{PromptFast}{ScienceWorld: Next-action prediction}
Your ScienceWorld task is: {task_description}
Prior to this step, you have already taken {step_count} step(s). Below are the most recent {history_length} observations and the corresponding actions you took: {action_history}
You are now at step {current_step} and your current observation is: {current_observation}
Available action commands: [{action_templates}]
Available objects you can interact with: [{objects}]

Now it's your turn to take an action. Combine an action command with appropriate object(s) to form a valid action.
You should choose a valid action for the current step and present it within <action> </action> tags.
\end{agentprompt}

\begin{agentprompt}{PromptReference}{ScienceWorld: Reference-conditioned inverse dynamics}
Your ScienceWorld task is: {task_description}
Prior to this step, you have already taken {step_count} step(s). Below are the most recent {history_length} observations and the corresponding actions you took: {action_history}
You are now at step {current_step} and your current observation is: {current_observation}

Complete the missing current action in this reference transition:
Current observation --[MISSING CURRENT ACTION]--> reference next observation.

The reference next observation is the environment state immediately after the missing current action. It may be a terminal state.
Reference next observation:
{reference_next_observation}

Available action commands: [{action_templates}]
Available objects you can interact with: [{objects}]

Now it's your turn to take one action for the current step. Predict the MISSING CURRENT ACTION that leads to the reference next observation.
Combine an action command with appropriate object(s), then present exactly one action within <action> </action> tags.
\end{agentprompt}

\subsection{History Management}
\label{app:memory-management}
We store the full interaction history $h_t$ as chronological
observation--action pairs.
During ActFirst-OPD rollout, these pairs are $(o_i,\tilde a_i)$.
Past reasoning $z_i$ and unexecuted actions $a_i$ from asynchronous
full responses are excluded.
The task instruction $q$ and current observation $o_t$ enter the
schematic input $\vc_t=q\oplus h_t\oplus o_t$ separately.

Each generation mode renders its own request from this interaction
record, with $o_{t+1}^{\mathrm{ref}}$ added only for inverse dynamics.
Each full response $y_t=(z_t,a_t)$ is generated asynchronously
from a snapshot of the interaction context $\vc_t$;
subsequent history updates do not change that request.
Different templates can retain different history suffixes because
their token lengths differ.

For each request, we begin with a copy of $h_t$ and count tokens
after applying the chat template.
If the prompt exceeds its budget, we repeatedly remove the oldest
complete observation--action pair and render the prompt again,
until it fits or no history remains.
The corresponding no-history template is used when the retained
history is empty.
This selection leaves the stored $h_t$ unchanged and does not
shorten the current observation, benchmark-specific admissible action
information, or reference next observation.
We use no learned summaries or semantic history compression.

\subsection{Thinking Budget}
\label{app:thinking}

We use budgeted decoding for full think-then-act responses
$y_t=(z_t,a_t)$ generated by $\pi_{\vtheta}$ from the interaction context $\vc_t$
during both training and evaluation.
The same procedure applies to asynchronous full-response generation
for distillation in ActFirst-OPD.
Following the Qwen3 thinking-budget approach,\footnote{\url{https://qwen.readthedocs.io/en/stable/getting_started/quickstart.html\#thinking-budget}}
we use at most two generation requests to leave room for action
generation when reasoning is long.
Fast actions $\tilde a_t$ follow a separate action-only decoding path.

If the first request finishes normally, we use its output directly.
If it reaches its token budget before action generation starts,
we insert an early-stop instruction adapted to the action format:

\begin{agentprompt}{PromptFast}{Inserted thinking-budget continuation}
Considering the limited time by the user, I have to give the action based on the thinking directly now.
</think>

<action>
\end{agentprompt}

If reasoning has already ended, the inserted text omits
\texttt{</think>}; if action generation has started, no text is inserted.
For an unfinished response with remaining budget, a second request
continues from $\vc_t$, the first-stage output, and any inserted tokens.
Across all three benchmarks, the first-stage and total response limits
are 384 and 512 tokens during training, and 1,920 and 2,048 during
evaluation.
The second request generates at most 128 tokens, subject to the remaining
total budget after counting both sampled and inserted tokens.
The first-stage limit bounds generated tokens rather than fixing
the length of $z_t$, since reasoning may end before this limit.

Actions are extracted from the \texttt{<action>} field.
In think-then-act interaction, responses marked as truncated are
treated as empty actions.
The budgeted decoder does not mark a response as truncated solely
because it reaches the length limit after producing \texttt{</action>}.
During ActFirst-OPD training, the asynchronously generated
full response $y_t$ is used for distillation, while its action
$a_t$ is not executed; the environment advances through
$\tilde a_t$.
At evaluation time, the student follows standard think-then-act
interaction and submits the extracted action $a_t$ to the environment.

For OPD training, manually inserted tokens remain in the shared
student--teacher context but are excluded from supervision and
loss normalization.
The corresponding masks and optimization details are given in
Appendix~\ref{app:opd-optimization}.

\subsection{OPD Optimization}
\label{app:opd-optimization}

Following prior multi-turn OPD studies~\citep{tcod}, we optimize a sampled policy surrogate of the reverse-KL objective
in Eq.~\ref{eq:opd-objective}.
An optimization batch $\mathcal{B}$ contains turn-level full
responses $y_b=(z_b,a_b)$ indexed by $b$, with token contexts
$\vc_{b,j}$ defined as in Section~\ref{sec:preliminaries}.
Let $\vtheta_{\mathrm{old},b}$ denote the student parameters used
to generate response $b$.
Turn-level full responses from completed rollout batches are stored in a first-in, first-out (FIFO) buffer.
Each update consumes responses generated by the current student version or the immediately preceding version, discarding older samples during batch construction.
Remaining responses stay buffered for subsequent updates, subject to the same version constraint; consumed responses are not reused.
The frozen teacher scores the generated tokens under the same contexts.

Let $m_{b,j}=1$ for a model-generated response token and $0$ for
manually inserted tokens or padding.
Both generated reasoning and action tokens are supervised, including
model-generated formatting tokens.
The inserted tokens described in Appendix~\ref{app:thinking} remain
in the student--teacher context but contribute neither to the loss
nor to its normalization.
At supervised positions, the token advantage is
\begin{equation}
    A_{b,j}
    =
    \operatorname{sg}\!\left[
        \log
        \frac{\pi_{\vphi}(x_{b,j}\mid\vc_{b,j})}
             {\pi_{\vtheta_{\mathrm{old},b}}(x_{b,j}\mid\vc_{b,j})}
    \right],
    \label{eq:opd-token-advantage}
\end{equation}
where $\operatorname{sg}$ denotes stop-gradient; masked positions
have zero advantage.
The distillation coefficient is set to one.
Token advantages are computed without environment rewards or advantage normalization.

Using stored sampling log-probabilities for the behavior-policy
denominator, we form the numerically bounded ratio
\begin{equation}
    \rho_{b,j}(\vtheta)
    =
    \exp\!\left[
        \operatorname{clip}\!\left(
            \log
            \frac{\pi_{\vtheta}(x_{b,j}\mid\vc_{b,j})}
                 {\pi_{\vtheta_{\mathrm{old},b}}(x_{b,j}\mid\vc_{b,j})},
            -20,20
        \right)
    \right].
    \label{eq:opd-ratio}
\end{equation}
The log-probability ratio is clipped to $[-20,20]$ for numerical
stability before exponentiation.
We use the proximal policy optimization (PPO) surrogate~\citep{ppo} with dual
clipping~\citep{dualclip}.
With clipping width $\delta=0.2$ and dual-clipping constant $\kappa=3$,
the minimized token loss is
\begin{align}
    \ell_{\mathrm{clip}}(\rho,A)
    &=
    -\min\!\left\{
        \rho A,\,
        \operatorname{clip}(\rho,1-\delta,1+\delta)A
    \right\},
    \nonumber\\
    \ell_{\mathrm{DC}}(\rho,A)
    &=
    \begin{cases}
        \min\!\left\{\ell_{\mathrm{clip}}(\rho,A),-\kappa A\right\},
            & A<0,\\
        \ell_{\mathrm{clip}}(\rho,A),
            & A\ge 0.
    \end{cases}
    \label{eq:opd-dual-clip}
\end{align}
Here, $\ell_{\mathrm{DC}}$ denotes the dual-clipped PPO token loss. 

We aggregate this loss using \texttt{seq-mean-token-mean}:
\begin{equation}
    \mathcal{L}_{\mathrm{sur}}(\vtheta)
    =
    \frac{1}{|\mathcal{B}|}
    \sum_{b\in\mathcal{B}}
    \frac{
        \sum_{j=1}^{L_b} m_{b,j}\,
        \ell_{\mathrm{DC}}\!\left(\rho_{b,j}(\vtheta),A_{b,j}\right)
    }{
        \sum_{j=1}^{L_b}m_{b,j}+\epsilon
    },
    \label{eq:opd-surrogate}
\end{equation}
where $L_b$ is the response length and $\epsilon=10^{-8}$.
Thus, losses are averaged over supervised tokens within each
full response, and the resulting response losses are averaged
across the batch.

All three generation modes share the student parameters $\vtheta$, which are updated by backpropagating through $\mathcal{L}_{\mathrm{sur}}$.
Sampling log-probabilities, teacher scores, and advantages remain
fixed during each update, and the teacher parameters $\vphi$ remain
frozen throughout training.
The main method adds no entropy penalty, separate reference-policy
KL loss, or auxiliary loss on the fast actions $\tilde a_t$.
%
%

\subsection{Transition Consistency Checks}
\label{app:transition-checks}

The checks in Section~\ref{sec:verification-fallback} compare the student's actual observations and available action information with their offline reference counterparts.
They use observable representations rather than simulator-state equality.
Retaining reference guidance requires a valid action, agreement of the environment's termination flag with the reference, and the benchmark-specific conditions below.
The generated action need not equal the reference action.

We normalize observation text before comparison by applying Unicode NFKC normalization and case folding, standardizing line endings, collapsing whitespace within each line, and removing empty lines.
The remaining line order is preserved except in ScienceWorld, as specified below.
Normalized observations are compared for exact equality, except for the WebShop search-result rule described below.

\paragraph{ALFWorld.}
An action is valid if its normalized text belongs to the current admissible-action set, excluding \texttt{help}.
We require the normalized next observation $o_{t+1}$ to equal $o_{t+1}^{\mathrm{ref}}$.
For nonterminal transitions, the next admissible-action sets must also match after text normalization and deduplication; this comparison is omitted for terminal transitions.

\paragraph{WebShop.}
A parsed \texttt{search[query]} action is valid when its query is nonempty and a search bar is available; a parsed \texttt{click[target]} action is valid when its lowercased target appears among the current clickable elements.
Observation comparison additionally removes click-notification lines of the form \texttt{You have clicked ...}.
We compare normalized next observations and, for nonterminal transitions, the available-action lists after case folding, whitespace normalization, and sorting.
The repeated instruction block is retained in these comparisons; its removal applies only to the reference observation shown in the inverse-dynamics prompt.
We permit different nonterminal search results when both the student and reference actions are searches and the product identifier in the reference's immediately following click action remains clickable.
The search-result mismatch is also allowed at the next turn, provided that the reference product remains clickable;
the subsequent click transition must satisfy the ordinary comparison rules.
Terminal transitions require matching normalized observations but no available-action comparison.

\paragraph{ScienceWorld.}
An action is valid if its normalized text appears in the current admissible-action list.
We compare normalized next observations and action-template sets, including for terminal transitions.
Observation normalization additionally sorts lines and object enumerations inside non-nested \texttt{(containing ...)} expressions and after the last \texttt{ is: } on each line.
Enumeration parsing ignores commas inside parentheses and retains clauses starting with \texttt{which} or \texttt{that} with their preceding item.
Action templates use the same text normalization, with duplicates removed and order ignored.
Possible-object aliases are excluded from matching because their enumeration can vary across resets.
The full admissible-action list is used for action validation, not set equality; separate room descriptions, inventory, reward, and score are not compared.

\medskip
\noindent
Across all three benchmarks, a failed check permanently disables reference conditioning for the remainder of the rollout, without resetting the environment.
An invalid reference-conditioned action proposal triggers one autonomous next-action prediction request from the unchanged interaction context before environment execution.
Reference actions are used internally for the stated checks but are not supplied to the student's prompt.

\section{Experiment Details}
\label{app:experiment-details}

\subsection{Benchmarks}
\label{app:benchmarks}

We use text observations and actions across all three benchmarks.
Table~\ref{tab:benchmark-details} summarizes environment versions,
task counts, and interaction limits.
Training and evaluation task instances are disjoint.

\begin{table}[t]
    \centering
    \caption{\textbf{Benchmark configurations.} Counts refer to task instances;
    the WebShop version denotes its base repository commit.}
    \label{tab:benchmark-details}
    \small
    \setlength{\tabcolsep}{5pt}
    \renewcommand{\arraystretch}{1.10}
    \begin{tabular*}{\linewidth}{@{\extracolsep{\fill}}lcccc@{}}
        \toprule
        Benchmark & Version & Training & Evaluation & Max. turns \\
        \midrule
        ALFWorld
        & 0.4.2 & 3,553 & 140 seen / 134 unseen & 30 \\
        WebShop
        & \texttt{64fa2a5} & 4,096 & 100 & 15 \\
        ScienceWorld
        & 1.2.3 & 2,294 & 150 & 30 \\
        \bottomrule
    \end{tabular*}
\end{table}

\paragraph{ALFWorld.}
ALFWorld~\citep{alfworld}\footnote{\url{https://github.com/alfworld/alfworld}}
contains six household task types,
reported in Table~\ref{tab:main-alfworld} as Pick (pick and place),
Look (examine an object under a light), Clean (clean and place),
Heat (heat and place), Cool (cool and place), and Pick2 (pick two
objects and place them).
The prompt lists the current admissible actions, excluding \texttt{help}, with object and receptacle arguments already specified, and instructs the student to select one.
We use the \texttt{json\_2.1.1} game files with TextWorld~\citep{textworld} 1.7.0,
training on \texttt{train} and evaluating separately on
\texttt{valid\_seen} and \texttt{valid\_unseen}.
The seen split uses rooms encountered during training with new
object configurations; the unseen split uses held-out rooms with
different layouts and receptacles.
Both splits cover the same six task types.

\paragraph{WebShop.}
WebShop~\citep{webshop}\footnote{\url{https://github.com/princeton-nlp/WebShop}}
requires agents to search for and purchase
products satisfying language instructions.
The prompt lists the current \texttt{click[target]} actions and, when search is available, a \texttt{search[query]} template whose query is generated by the student.
Following prior multi-turn OPD studies~\citep{tcod}, we use its text interface with the full product catalog and a custom
split: goal IDs 0--4095 for training and 4096--4195 for evaluation.
The goal-construction seed is fixed at 233.

\paragraph{ScienceWorld.}
ScienceWorld~\citep{scienceworld}\footnote{\url{https://github.com/allenai/ScienceWorld}}
provides interactive science tasks
covering physical, chemical, and biological processes.
To limit prompt length, we provide action templates and possible object referents separately for the student to construct commands.
The environment's admissible-action list is used for action validation.
We use the \texttt{easy} environment preset and a custom split:
training uses the first half of the variations from 17 task types,
while evaluation uses the last five variations from each of all
30 task types.
The evaluation set therefore contains 85 instances from task types
covered during training and 65 from the remaining 13 types.

\paragraph{Evaluation Metrics.}
For $N$ evaluation tasks, let $u_i\in\{0,1\}$ indicate task success,
$\mathrm{score}_i\in[0,100]$ denote the task score, and $T_i$ denote the number
of interaction turns used.
We report
\begin{equation}
    \mathrm{SR}=\frac{100}{N}\sum_{i=1}^{N}u_i,
    \qquad
    \mathrm{Score}=\frac{1}{N}\sum_{i=1}^{N}\mathrm{score}_i,
    \qquad
    \mathrm{Round}
=
\overline{T}=\frac{1}{N}\sum_{i=1}^{N}T_i.
    \label{eq:evaluation-metrics}
\end{equation}
SR is expressed as a percentage; Score is reported only for
WebShop and ScienceWorld to capture partial task completion.

ALFWorld success requires satisfying the task goal within the turn
limit; task-type and overall results pool the seen and unseen tasks.
WebShop assigns partial credit for matching the requested product
attributes, options, and price, weighted by product-type match.
Its task score is the terminal purchase reward scaled to $[0,100]$,
with zero assigned if no purchase is made.
ScienceWorld assigns partial credit for completing task subgoals;
we use the highest environment score reached during the episode,
clipped to $[0,100]$.
Success on WebShop and ScienceWorld requires a full score,
implemented as $\mathrm{score}_i\geq99.999$.

Round averages interaction turns over both successful and failed episodes, including invalid action attempts.
On ALFWorld, episodes end upon task success or at the 30-turn limit, so failed episodes contribute 30 turns.
On WebShop and ScienceWorld, unsuccessful purchases or terminal failures can end an episode early; fewer turns therefore do not necessarily indicate better task performance.

\subsection{Teacher Training and Reference Construction}
\label{app:teacher-reference}

\paragraph{Teacher Training.}
Each benchmark uses a task-specialized Qwen3-8B teacher, shared
across student sizes and frozen throughout OPD.
The ALFWorld teacher is trained by SFT on successful trajectories
from the GiGPO-Qwen2.5-7B model, followed by GiGPO~\citep{gigpo}.
The WebShop teacher uses score-filtered SFT on trajectories from the
corresponding GiGPO-Qwen2.5-7B model, followed by GiGPO~\citep{gigpo}.
The ScienceWorld teacher is trained directly with GiGPO.

\paragraph{Reference Trajectories.}
To obtain high-quality references, we select trajectories based on task success or score, validate them through environment
replay, and favor shorter action sequences.
For ALFWorld, candidates include public demonstrations from Hugging Face, hand-coded solutions, and ten rollouts per training
task from each of GiGPO-Qwen2.5-7B and our Qwen3-8B teacher.
We select the shortest candidate marked successful and validate it.
For WebShop, we combine public demonstrations from Hugging Face, ten teacher rollouts per training task, and oracle solutions
constructed from training-task goals.
Candidates are matched to the corresponding training-task instructions and replayed;
selection prioritizes higher score and then fewer actions.
For ScienceWorld, we combine ten teacher rollouts per training task, simulator-generated gold paths, and task-specific scripted
solutions.
Candidates are shortened while preserving successful replay.
We replay selected action sequences offline under the training environment configuration and cache the initial and subsequent
observations.
The cached next observations serve as local transition targets
$o_{t+1}^{\mathrm{ref}}$ for reference-conditioned inverse dynamics.
The resulting caches cover all training tasks across the three benchmarks (100\% coverage), with one selected trajectory per task.

\paragraph{Offline Cost.}
Teacher training and reference construction are completed before
student training and excluded from the reported training wall-clock time.
For a fixed task set and environment, the resulting teachers and
reference caches can be reused across student sizes and repeated
training runs, amortizing their one-time preparation costs.
Table~\ref{tab:offline-cost} reports teacher training and candidate
rollout collection separately.

\begin{table}[t]
    \centering
    \caption{\textbf{Offline preparation costs.}
    Times are in hours, with eight GPUs per job.
    Teacher training time includes all listed stages.
    Ten candidates are generated per task for each source model.
   }
    \label{tab:offline-cost}
    \small
    \setlength{\tabcolsep}{5pt}
    \renewcommand{\arraystretch}{1.12}
    \begin{tabular*}{\linewidth}
        {@{\extracolsep{\fill}}lrrrrr@{}}
        \toprule
        & \multicolumn{3}{c}{Teacher training}
        & \multicolumn{2}{c}{Candidate collection} \\
        \cmidrule(lr){2-4}
        \cmidrule(l){5-6}
        Benchmark
        & Time (h) & SFT steps & GiGPO steps
        & Time (h) & Candidates \\
        \midrule
        ALFWorld
        & 4.18  & 250 & 250
        & 8.50  & 71,060 \\
        WebShop
        & 22.05 & 250 & 2,200
        & 4.81  & 40,960 \\
        ScienceWorld
        & 18.27 & \textemdash & 144
        & 19.88 & 22,940 \\
        \bottomrule
    \end{tabular*}
\end{table}

\subsection{Training Protocols and Hyperparameters}
\label{app:training-hyperparameters}

Table~\ref{tab:training-hyperparameters} summarizes student training configurations
across benchmarks.
Within each benchmark--model setting, methods share the initial
Qwen3 checkpoint and frozen teacher.
In the training pipeline, rollout collection and student
optimization proceed asynchronously.
The OPD loss, token masks, and sample staleness control are detailed
in Appendix~\ref{app:opd-optimization}.

\begin{table}[t]
    \centering
    \caption{\textbf{Training hyperparameters across all benchmarks.}}
    \label{tab:training-hyperparameters}
    \small
    \setlength{\tabcolsep}{4pt}
    \renewcommand{\arraystretch}{1.02}
    \begin{tabular*}{\linewidth}{
        @{\hspace{0.8em}}l@{\extracolsep{\fill}}l@{}
    }
        \toprule
        \multicolumn{1}{@{}l}{\textbf{Hyperparameter}}
        & \textbf{Value} \\
        \midrule

        \multicolumn{2}{@{}l}{
            \textbf{Algorithm and Optimization}
        } \\
        Distillation objective & Student-to-teacher reverse KL \\
        PPO clipping & 0.2 \\
        Dual-clipping constant & 3.0 \\
        Optimizer & AdamW~\citep{adamw} \\
        Learning rate & $10^{-6}$ \\
        Learning-rate schedule & Constant, no warmup \\
        Adam coefficients $(\beta_1,\beta_2)$ & $(0.9,0.999)$ \\
        Weight decay & 0.01 \\
        Gradient norm clipping & 1.0 \\
        Sample staleness control & Version lag $\leq 1$ \\
        Loss aggregation & \texttt{seq-mean-token-mean} \\
        \midrule

        \multicolumn{2}{@{}l}{\textbf{Training}} \\
        Training updates & 250 \\
        Rollout batch & 16 tasks \\
        Rollout trajectories per task & 1 \\
        Optimization batch & 64 turn-level full responses \\
        Optimization epochs per batch & 1 \\
        Weight synchronization interval & Every update \\
        \midrule

        \multicolumn{2}{@{}l}{\textbf{Models and Generation}} \\
        Student models & Qwen3-0.6B, 1.7B, 4B \\
        Teacher model & Qwen3-8B \\
        Prompt limit & 10,240 tokens \\
        Full-response limit & 512 tokens \\
        Thinking budget (first stage) & 384 tokens \\
        Action continuation budget & Up to 128 tokens \\
        Sampling temperature & 1.0 \\
        Top-$p$ & 1.0 \\
        Top-$k$ filtering & Disabled \\
        Thinking mode & Enabled (\texttt{enable\_thinking=True}) \\
        Rollout engine seed & 42 \\
        \midrule

        \multicolumn{2}{@{}l}{\textbf{Distributed Training}} \\
        Hardware & 8 A100-SXM4-80GB GPUs \\
        Rollout GPUs & 4 \\
        Teacher-scoring GPUs & 2 \\
        Student parameter optimization GPUs & 2 \\
        Rollout engines per GPU & 1 \\
        Runners per rollout engine & 4 \\
        Total rollout runners & 16 \\
        Inference tensor parallelism & 1 \\
        Training sequence parallelism & 2 (Ulysses~\citep{ulysses}) \\
        Precision & \texttt{bfloat16} \\
        Dynamic batching token budget & 16,384 tokens per GPU \\
        vLLM GPU memory fraction & 0.7 \\
        Prefix caching & Disabled \\
        \midrule

        \multicolumn{2}{@{}l}{
            \textbf{Environment-Specific Turn Limits}
        } \\
        ALFWorld & 30 \\
        WebShop & 15 \\
        ScienceWorld & 30 \\
        \bottomrule
    \end{tabular*}
\end{table}

\paragraph{Training Time.}
Training wall-clock time in Tables~\ref{tab:main-alfworld}
and~\ref{tab:main-webshop-scienceworld} spans training-process
launch to exit, including initialization, rollout collection,
teacher scoring, student updates, checkpoint saving, and shutdown.
Offline teacher training, reference construction, and subsequent
evaluation are excluded.

\subsection{Evaluation Protocols and Hyperparameters}
\label{app:evaluation-details}

\begin{table}[t]
    \centering
    \caption{\textbf{Evaluation hyperparameters across all benchmarks.}}
    \label{tab:evaluation-hyperparameters}
    \small
    \setlength{\tabcolsep}{4pt}
    \renewcommand{\arraystretch}{1.02}
    \begin{tabular*}{\linewidth}{
        @{\hspace{0.8em}}l@{\extracolsep{\fill}}l@{}
    }
        \toprule
        \multicolumn{1}{@{}l}{\textbf{Hyperparameter}}
        & \textbf{Value} \\
        \midrule

        \multicolumn{2}{@{}l}{\textbf{Evaluation Protocol}} \\
        Interaction mode & Think-then-act \\
        Reference information & None \\
        Evaluation seeds & 42, 43, 44 \\
        Trajectories per task and seed & 1 \\
        Evaluation batch & 64 tasks \\
        \midrule

        \multicolumn{2}{@{}l}{\textbf{Generation}} \\
        Prompt limit & 20,480 tokens \\
        Full-response limit & 2,048 tokens \\
        Thinking budget (first stage) & 1,920 tokens \\
        Action continuation budget & Up to 128 tokens \\
        Sampling temperature & 0.4 \\
        Top-$p$ & 1.0 \\
        Top-$k$ filtering & Disabled \\
        Thinking mode & Enabled (\texttt{enable\_thinking=True}) \\
        \midrule

        \multicolumn{2}{@{}l}{\textbf{Distributed Inference}} \\
        Hardware & 8 A100-SXM4-80GB GPUs \\
        Inference engines per GPU & 1 \\
        Inference tensor parallelism & 1 \\
        Precision & \texttt{bfloat16} \\
        vLLM GPU memory fraction & 0.82 \\
        Prefix caching & Disabled \\
        Chunked prefill & Enabled \\
        \midrule

        \multicolumn{2}{@{}l}{
            \textbf{Benchmark-Specific Runners per Engine}
        } \\
        ALFWorld & 8 \\
        WebShop & 4 \\
        ScienceWorld & 4 \\
        \midrule

        \multicolumn{2}{@{}l}{
            \textbf{Environment-Specific Turn Limits}
        } \\
        ALFWorld & 30 \\
        WebShop & 15 \\
        ScienceWorld & 30 \\
        \bottomrule
    \end{tabular*}
\end{table}

All methods use standard think-then-act interaction without
reference information, with the settings in
Table~\ref{tab:evaluation-hyperparameters}.
Task splits and metric definitions are provided in
Appendix~\ref{app:benchmarks}; budgeted decoding and action
extraction follow Appendix~\ref{app:thinking}.

For a fixed checkpoint, let $M_r$ denote a task-performance
metric evaluated with seed $r\in\mathcal{S}=\{42,43,44\}$.
We report the mean and sample standard deviation
(Tables~\ref{tab:main-alfworld}
and~\ref{tab:main-webshop-scienceworld};
Figure~\ref{fig:alfworld-performance-time}):
\begin{equation}
    \overline{M}
    = \frac{1}{3}\sum_{r\in\mathcal{S}} M_r,
    \qquad
    \operatorname{SD}(M)
    = \sqrt{
        \frac{1}{3-1}
        \sum_{r\in\mathcal{S}}
        \left(M_r-\overline{M}\right)^2
    }.
    \label{eq:evaluation-aggregation}
\end{equation}

\subsection{Rollout Efficiency and Quality Measurements}
\label{app:analysis-protocols}

\paragraph{Rollout Efficiency.}
For the rollout efficiency experiment in
Figure~\ref{fig:efficiency-analysis}(a,b), we use the initial
Qwen3-1.7B model and 1,024 initial ALFWorld contexts, sampled
proportionally across the six training task types with seed 42.
Full responses use the standard think-then-act prompt; fast actions
use the action-only prompt without reference information.
We vary the cap $b_{\max}$ on concurrent generation requests,
shared by fast-action and full-response requests,
and the virtual interaction horizon $T$
defined in Section~\ref{sec:speedup-analysis}.
Table~\ref{tab:scheduling-hyperparameters} summarizes the settings.

\begin{table}[t]
    \centering
    \caption{\textbf{Rollout efficiency experiment settings.}}
    \label{tab:scheduling-hyperparameters}
    \small
    \setlength{\tabcolsep}{4pt}
    \renewcommand{\arraystretch}{1.02}
    \begin{tabular*}{\linewidth}{
        @{\hspace{0.8em}}l@{\extracolsep{\fill}}l@{}
    }
        \toprule
        \multicolumn{1}{@{}l}{\textbf{Parameter}}
        & \textbf{Value} \\
        \midrule

        \multicolumn{2}{@{}l}{\textbf{Workload}} \\
        Tasks per batch & 16 \\
        Fast-action output length & 16 tokens \\
        Full-response output length & 128 or 512 tokens \\
        Sampling temperature & 1.0 \\
        Top-$p$ & 1.0 \\
        Top-$k$ filtering & Disabled \\
        \midrule

        \multicolumn{2}{@{}l}{\textbf{Inference}} \\
        Hardware per worker & 1 A100-SXM4-80GB GPU \\
        Tensor parallelism & 1 \\
        Precision & \texttt{bfloat16} \\
        vLLM GPU memory fraction & 0.85 \\
        Maximum tokens per scheduling step & 16,384 \\
        Prefix caching & Disabled \\
        Chunked prefill & Enabled \\
        CUDA graphs & Enabled \\
        \midrule

        \multicolumn{2}{@{}l}{\textbf{Scans and Repeats}} \\
        Concurrency scan: $b_{\max}$
        & $\{16,32,64,128,256\}$ \\
        Fixed horizon for concurrency scan & $T=30$ \\
        Horizon scan: $T$ & $\{1,2,4,8,16,30\}$ \\
        Fixed concurrency for horizon scan & $b_{\max}=256$ \\
        Paired timing repeats & 3 \\
        \bottomrule
    \end{tabular*}
\end{table}

Under Vanilla OPD scheduling, each task's next turn waits for
its full response.
Under ActFirst-OPD scheduling, fast-action and full-response
requests are submitted together, and fast-action completion
releases the next turn.
Tasks run concurrently within each batch, which ends only after
all requests finish.
Timing includes queueing, prefill, decoding, and request scheduling,
but excludes initialization, warmup, input tokenization, and output
detokenization.
Each paired repeat uses the same engine, contexts, and full-response
sampling seeds, with strategy order alternated across repeats.

Let $B$ denote a batch of 16 task indices, and let
$C_{\mathrm{Vanilla},B}^{(k)}$ and
$C_{\mathrm{ActFirst},B}^{(k)}$ denote its measured completion
times in paired repeat $k$.
The rollout speedup in each repeat is
\begin{equation}
    S_{\mathrm{real}}^{(k)}
    =
    \frac{
        \sum_B C_{\mathrm{Vanilla},B}^{(k)}
    }{
        \sum_B C_{\mathrm{ActFirst},B}^{(k)}
    },
    \qquad k\in\{1,2,3\},
    \label{eq:measured-rollout-speedup}
\end{equation}
where the sums cover all 64 batches.
We report the mean and sample standard deviation of these
three ratios.

Idealized speedup estimates use independent single-request
measurements, including prefill and decoding.
For the context of task $i$, let
$\widehat\ell_{\mathrm{fast},i}$ and
$\widehat\ell_{\mathrm{full},i}$ denote the median fast-action
and full-response latencies over three measurements.
Treating these calibrated latencies as constant across $T$
virtual turns, the estimated batch completion times under
unrestricted overlap are
\begin{equation}
    \begin{aligned}
        \widehat C_{\mathrm{Vanilla},B}
        &=
        \max_{i\in B}
        T\widehat\ell_{\mathrm{full},i}, \\
        \widehat C_{\mathrm{ActFirst},B}
        &=
        \max_{i\in B}
        \left[
            (T-1)\widehat\ell_{\mathrm{fast},i}
            +
            \max\left\{
                \widehat\ell_{\mathrm{fast},i},
                \widehat\ell_{\mathrm{full},i}
            \right\}
        \right], \\
        \widehat S_{\mathrm{ideal}}(T)
        &=
        \frac{
            \sum_B \widehat C_{\mathrm{Vanilla},B}
        }{
            \sum_B \widehat C_{\mathrm{ActFirst},B}
        }.
    \end{aligned}
    \label{eq:calibrated-rollout-speedup}
\end{equation}
The maximum over tasks reflects that a batch waits for its
slowest task.
The inner maximum for ActFirst-OPD accounts for both requests
at the final turn.
When $\widehat\ell_{\mathrm{fast},i}
\leq\widehat\ell_{\mathrm{full},i}$, its per-task estimate
reduces to the expression in
Proposition~\ref{prop:rollout-speedup}.
These calibrated estimates extend the idealized analysis
to task batches and are not hardware upper bounds.

\paragraph{Environment-Transition Throughput.}
For Qwen3-1.7B on ALFWorld, both methods use 16 tasks per rollout
batch, four rollout engines, and four runners per engine.
Let $N_{\mathrm{env}}$ count completed environment transitions
within a measurement window of $\Delta t$ seconds.
Throughput is
\begin{equation}
    \mathrm{Throughput}
    =
    \frac{N_{\mathrm{env}}}{\Delta t},
    \label{eq:environment-throughput}
\end{equation}
measured in environment transitions per second
(\mbox{env.\ steps / s}).

The full-training window starts at the first training-sample
request and ends when update 250 finishes; the early window covers
the first 600 seconds from the same origin.
Both include rollout, scoring, optimization, and waiting time,
but exclude initialization and shutdown.
Only transitions completed within each window are counted.
Table~\ref{tab:throughput-measurements} reports the measurements.

\begin{table}[t]
    \centering
    \caption{\textbf{Environment-transition throughput on ALFWorld
    with Qwen3-1.7B.}
    Measurements use training runs separate from those reported in
    Table~\ref{tab:main-alfworld}.
    For each method, both windows use the same run and start
    when rollout collection begins.
    The full-training window ends when update 250 completes;
    the early window covers the first 600 seconds.
    Initialization and final shutdown are excluded.
    }
    \label{tab:throughput-measurements}
    \small
    \setlength{\tabcolsep}{4pt}
    \renewcommand{\arraystretch}{1.02}
    \begin{tabular*}{\linewidth}{
        @{\extracolsep{\fill}}llrrr@{}
    }
        \toprule
        Window & Method & Transitions & Window time (s) & Env. steps / s \\
        \midrule
        Full training & Vanilla OPD & 55,525 & 15,202 & 3.65 \\
                      & ActFirst-OPD & 38,616 & 6,796 & 5.68 \\
        \midrule
        First 10 min & Vanilla OPD & 1,289 & 600 & 2.15 \\
                     & ActFirst-OPD & 3,803 & 600 & 6.34 \\
        \bottomrule
    \end{tabular*}
\end{table}

\paragraph{Rollout Quality.}
We compare think-then-act, direct action without reference next observations,
and ActFirst-OPD's reference-conditioned interaction with
autonomous fallback.
At each of updates 1, 125, and 250, all strategies use the same
frozen Qwen3-1.7B Vanilla OPD checkpoint and generate one trajectory
per ALFWorld training task (3,553 tasks).
Checkpoints are analyzed separately.
All strategies use temperature 1.0, top-$p=1.0$, no top-$k$
filtering, and a 30-turn limit, with sampling seeds matched
across strategies for each task and turn.
Think-then-act uses the training thinking budget in
Appendix~\ref{app:thinking}; all action-only requests stop at
\texttt{</action>} or 512 generated tokens.
ActFirst-OPD rollouts retain the training transition checks
and persistent autonomous fallback, but omit asynchronous
full-response generation, teacher scoring, and parameter updates.

An executed action is valid if parsing succeeds and the action,
after stripping surrounding whitespace, exactly matches an entry
in the admissible-action list.
Repetition is determined from environment state--action pairs.
For this offline measurement, state equality requires identical
sets of positive environment facts and the same task-success flag.
Actions are normalized for pair comparison.

A valid action counts as an unproductive repetition if the same state--action
pair was previously executed validly, no new normalized observation
has appeared since the end of that pair's most recent valid
execution, and the current action also produces no new observation.
An observation is new if its normalized text has not appeared
earlier in the rollout.
Text normalization uses Unicode NFKC, case folding, and whitespace
normalization.
The observation record includes $o_1$ and feedback from invalid
attempts, is maintained independently of prompt-history truncation,
and excludes reference observations and model outputs.

For each checkpoint--strategy pair, let $r_i\in\{0,1\}$ indicate
whether task $i$ contains at least one repetition as defined above,
$u_i\in\{0,1\}$ indicate success (Appendix~\ref{app:benchmarks}), and $T_i$ denote its executed
interaction turns (Appendix~\ref{app:benchmarks}).
For $N=3{,}553$ tasks, the metrics in
Figure~\ref{fig:rollout-quality}(a--c) are computed as
\begin{equation}
    \begin{alignedat}{2}
        &\text{(a) Tasks with }\geq 1\text{ repeated action:}\quad
        &\mathrm{Repeat}\;(\%)
        &= \frac{100}{N}\sum_{i=1}^{N}r_i, \\[3pt]
        &\text{(b) Successful trajectory yield:}\quad
        &\mathrm{Yield}
        &= \frac{100\sum_{i=1}^{N}u_i}
                 {\sum_{i=1}^{N}T_i}, \\[3pt]
        &\text{(c) Average rollout turns:}\quad
        &\overline{T}
        &= \frac{1}{N}\sum_{i=1}^{N}T_i.
    \end{alignedat}
    \label{eq:rollout-quality-metrics}
\end{equation}
Repeat is the percentage of tasks containing at least one
repetition; Yield counts successful trajectories per
100 environment transitions; $\overline{T}$ is the mean number
of executed turns across all tasks.
Turn counts include invalid attempts and failed trajectories.
All failed trajectories in this collection reach 30 turns,
so $\overline{T}$ also equals completion cost with a 30-turn
failure penalty.

\section{Extended Related Work}
\label{app:extended-related-work}

We extend Section~\ref{sec:related_work} by comparing how related OPD and agent distillation methods collect or reuse trajectories, provide teacher supervision, and improve training efficiency.
ReOPD~\citep{reopd} samples a single-turn student response conditioned on each selected teacher-trajectory prefix and trains the student with teacher supervision, without online environment interaction.
Our Reference-Prefix Replay (100\%) ablation (Table~\ref{tab:ablation_average_across_scales}) similarly uses reference trajectories to construct student training contexts, but executes the reference actions in the environment. 
This ablation evaluates the effect of replacing student-generated actions with reference actions within our framework.
It does not reproduce ReOPD's off-environment training or prefix-sampling schedule.
Guided-OPD~\citep{guidedopd} mixes teacher- and student-generated turns under a curriculum that gradually removes teacher intervention.
SAGE-OPD~\citep{sageopd} combines turn-level intervention based on environment feedback and teacher judgments with teacher-confidence weighting of the distillation loss.
Structured agent distillation (SAD)~\citep{sad} segments ReAct trajectories into reasoning and action spans for separate supervision.
Multi-rollout on-policy distillation (MOPD)~\citep{mopd} conditions the teacher on successful and failed peer rollouts to improve
token-level supervision.
Near-policy distillation~\citep{npd} decouples student-response generation from optimization, using precomputed teacher logits and sequence packing for efficient training and sample filtering to address policy lag and sample noise.
Lightning OPD~\citep{lightningopd} precomputes teacher log-probabilities on rollouts from an SFT-initialized student and reuses them during distillation, eliminating live teacher serving during optimization.
These works study supervision quality and distillation efficiency from different perspectives.
Our work focuses on the efficiency of online experience collection in multi-turn agent OPD, specifically reducing reasoning-induced interaction delays while maintaining rollout quality.
Reference next observations guide fast student-generated actions, with autonomous next-action prediction after divergence, while the student asynchronously generates a full response from each interaction context for teacher supervision.
This retains online environment interaction and full-response distillation while removing reasoning from the critical path of environment transitions.

\section{Additional Discussion}
\label{app:additional-discussion}

\paragraph{Scope of On-Policy Distillation.} In ActFirst-OPD, we distinguish two aspects of on-policy distillation: the distribution of rollout contexts in the full interaction trajectory, and the on-policy sampling of turn-level full responses used for distillation, conditional on those contexts.
The environment advances through fast actions $\tilde a_t$, using reference-conditioned inverse dynamics before divergence and autonomous next-action prediction afterward.
At each collected interaction context $\vc_t$, the student generates a full response $y_t=(z_t,a_t)$ without conditioning on the reference next observation.
The frozen teacher supervises its tokens, while $a_t$ is not executed.
Thus, turn-level full responses are sampled on-policy conditional on the collected contexts, while their context distribution
need not match that induced by the student's think-then-act policy.

\paragraph{Context Distribution and Exposure Bias.} Early OPD methods, including MiniLLM~\citep{minillm} and GKD~\citep{gkd}, address train--inference mismatch in response prefixes by incorporating student-generated sequences into distillation.
This mismatch concerns prefixes within a response, conditional on a given input.
In our setting, these are token prefixes $\vx_{t,<j}$ within each full response $y_t$, distinct from the distribution of interaction contexts $\vc_t$ across interaction turns.
Reference-conditioned inverse dynamics may improve task progress, as suggested by the rollout-quality results in Figure~\ref{fig:rollout-quality}(b,c).
At the same time, reference guidance may reduce exposure to states that could arise from accumulated errors in think-then-act interaction.
Autonomous next-action prediction continues interaction from actual deviations, but remains action-only and need not induce the same context distribution as think-then-act interaction at inference time.
The results in Tables~\ref{tab:main-alfworld} and~\ref{tab:main-webshop-scienceworld} support the practical value of this design in the evaluated settings, without establishing that exposure bias at inference time is eliminated or uniformly reduced.

\section{Additional Experimental Results}
\label{app:additional-experimental-results}

\paragraph{Performance--Time Trade-off.}
Figure~\ref{fig:alfworld-performance-time} plots mean evaluation SR against cumulative
training time on ALFWorld for three Qwen3 student sizes.
ActFirst-OPD completes 250 updates within two hours at all three sizes.
At approximately 1.5 hours, the 0.6B, 1.7B, and 4B students
reach SRs of 58.39\%, 69.34\%, and 80.66\%, respectively,
compared with 5.72\%, 25.43\%, and 42.94\% under Vanilla OPD.
These comparisons show higher SR under comparable training-time
budgets, complementing the fixed-update results in
Table~\ref{tab:main-alfworld}.

\begin{figure}[t]
    \centering
    \includegraphics[width=\linewidth]{
        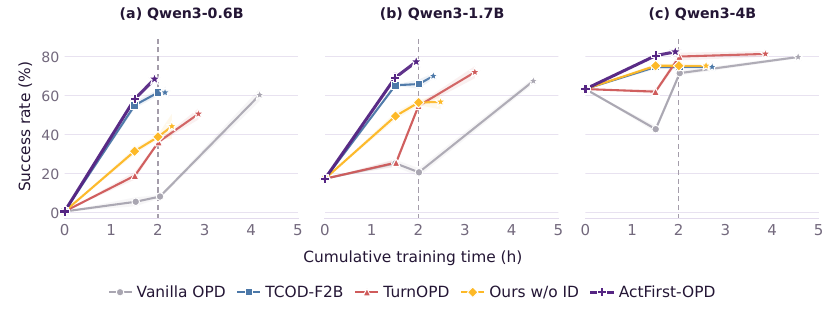
    }
    \caption{\textbf{Performance--time trade-off on ALFWorld for
    Qwen3-0.6B, 1.7B, and 4B students.}
    SR aggregates seen and unseen tasks; shading indicates $\pm1$ standard
    deviation across three evaluation seeds.
    Stars mark the final checkpoints in Table~\ref{tab:main-alfworld};
    the dashed line marks two hours of training.}
    \label{fig:alfworld-performance-time}
\end{figure}

\begin{table}[t]
    \centering
    \caption{\textbf{Cumulative reference alignment over 250 training updates.}
    Cumulative rates are computed by aggregating the corresponding counts across complete rollouts in rollout batches finished by update 250.
    First-action and full-trajectory alignment use the number of rollout tasks with reference guidance as the denominator;
    turn coverage uses all executed turns, including fallback turns.}
    \label{tab:cumulative-reference-alignment}
    \small
    \setlength{\tabcolsep}{4pt}
    \renewcommand{\arraystretch}{1.08}
    \begin{tabular*}{\linewidth}{@{\extracolsep{\fill}}lcccc@{}}
        \toprule
        Benchmark
        & Student
        & \begin{tabular}[c]{@{}c@{}}
            First-action\\alignment (\%)
          \end{tabular}
        & \begin{tabular}[c]{@{}c@{}}
            Full-trajectory\\alignment (\%)
          \end{tabular}
        & \begin{tabular}[c]{@{}c@{}}
            Aligned-turn\\coverage (\%)
          \end{tabular} \\
        \midrule
        \multirow{3}{*}{ALFWorld}
        & 0.6B & 84.86 & 32.36 & 18.19 \\
        & 1.7B & 89.38 & 49.75 & 33.59 \\
        & 4B   & 93.39 & 85.63 & 64.05 \\
        \midrule
        \multirow{3}{*}{WebShop}
        & 0.6B & 49.33 & 11.80 & 18.77 \\
        & 1.7B & 56.89 & 21.24 & 23.50 \\
        & 4B   & 60.85 & 49.87 & 51.77 \\
        \midrule
        \multirow{3}{*}{ScienceWorld}
        & 0.6B & 72.56 & 27.74 & 10.44 \\
        & 1.7B & 62.57 & 40.44 & 14.99 \\
        & 4B   & 94.67 & 67.87 & 33.27 \\
        \bottomrule
    \end{tabular*}
\end{table}

\paragraph{Reference Alignment During Training.}
We quantify reference alignment before fallback at the first-action,
trajectory, and turn levels.
Figures~\ref{fig:reference-alignment-first-action}--%
\ref{fig:reference-alignment-turn-coverage}
report rates computed from counts aggregated over the latest 10 completed rollout batches
at each update, using all available batches when fewer than 10
have completed.
Endpoint labels show the final-window rates at update 250;
Table~\ref{tab:cumulative-reference-alignment} reports
full-training cumulative rates.

Larger students achieve higher cumulative full-trajectory alignment
and aligned-turn coverage within each benchmark
(Table~\ref{tab:cumulative-reference-alignment}).
First-action alignment consistently exceeds full-trajectory
alignment, indicating that a matching initial transition does
not guarantee complete reference following.
Aligned-turn coverage also captures reference-aligned prefixes
in rollouts that later diverge.
Windowed alignment rates do not increase monotonically during
training
(Figures~\ref{fig:reference-alignment-first-action}--%
\ref{fig:reference-alignment-turn-coverage}).

\begin{figure}[t]
    \centering
    \includegraphics[
        width=\linewidth,
        height=0.42\textheight,
        keepaspectratio
    ]{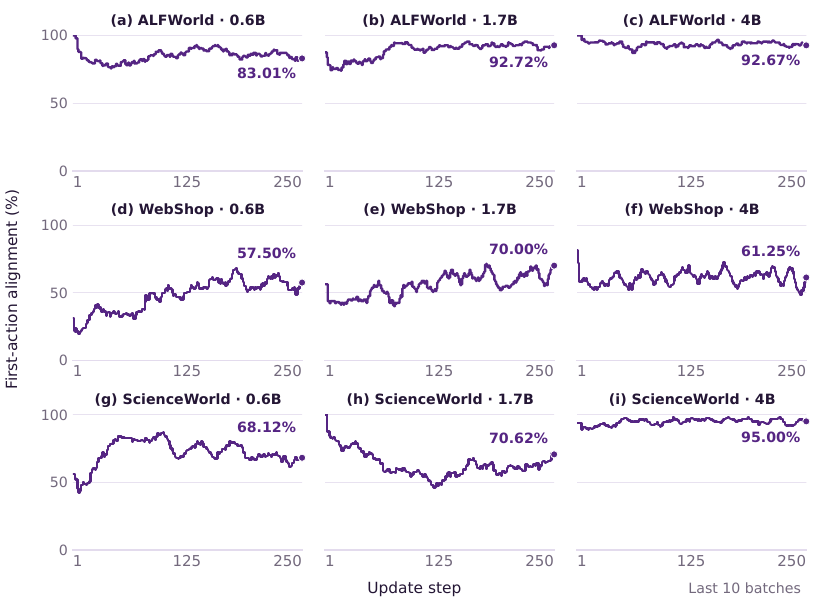}
    \caption{\textbf{First-action reference alignment during training.}
    Fraction of rollout tasks with reference guidance whose first
    transition passes the transition consistency check.}
    \label{fig:reference-alignment-first-action}
\end{figure}

\begin{figure}[t]
    \centering
    \includegraphics[
        width=\linewidth,
        height=0.40\textheight,
        keepaspectratio
    ]{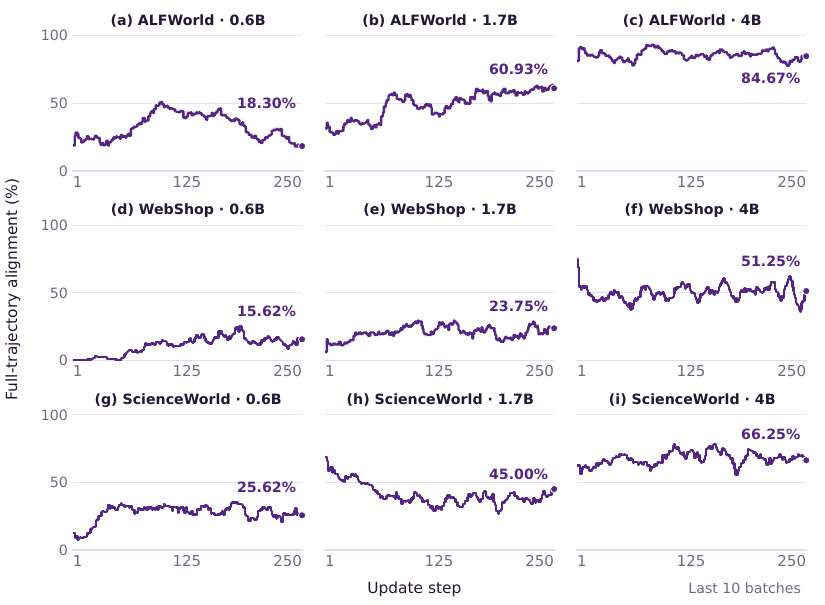}
    \caption{\textbf{Full-trajectory reference alignment during training.}
    Fraction of rollouts with reference guidance that complete
    the reference trajectory and terminate without triggering fallback.}
    \label{fig:reference-alignment-full-trajectory}

    \vspace{0.5em}

    \includegraphics[
        width=\linewidth,
        height=0.40\textheight,
        keepaspectratio
    ]{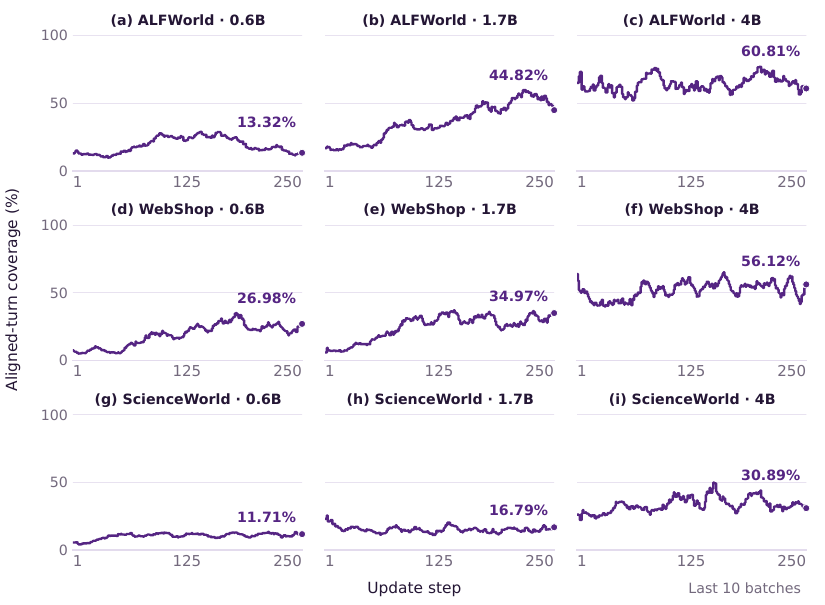}
    \caption{\textbf{Reference-aligned turn coverage during training.}
    Reference-aligned turns as a fraction of all executed turns,
    including fallback turns.}
    \label{fig:reference-alignment-turn-coverage}
\end{figure}





\end{document}